
\documentclass{article}

\usepackage{amsmath,amsfonts,bm}









\def\eqref#1{Eq.~(\ref{#1})}









\def\1{\bm{1}}


\def\rd{{\textnormal{d}}}







\def\vb{{\bm{b}}}
\def\vc{{\bm{c}}}

\def\vf{{\bm{f}}}

\def\vq{{\bm{q}}}

\def\vs{{\bm{s}}}

\def\vv{{\bm{v}}}



\def\mB{{\bm{B}}}

\def\mF{{\bm{F}}}

\def\mL{{\bm{L}}}

\def\mV{{\bm{V}}}
\def\mW{{\bm{W}}}

\DeclareMathAlphabet{\mathsfit}{\encodingdefault}{\sfdefault}{m}{sl}
\SetMathAlphabet{\mathsfit}{bold}{\encodingdefault}{\sfdefault}{bx}{n}

\def\gA{{\mathcal{A}}}

\def\gI{{\mathcal{I}}}

\def\gM{{\mathcal{M}}}

\def\gP{{\mathcal{P}}}

\def\gS{{\mathcal{S}}}

\def\gU{{\mathcal{U}}}



\def\sR{{\mathbb{R}}}








\newcommand{\E}{\mathbb{E}}



\DeclareMathOperator*{\argmax}{arg\,max}

\usepackage{times}
\usepackage{microtype}
\usepackage{graphicx}
\usepackage{amssymb}
\usepackage{amsthm}
\usepackage{booktabs} 
\usepackage{bbm}

\usepackage{algorithmic}
\usepackage[algo2e,vlined,ruled,linesnumbered]{algorithm2e}

\usepackage{enumitem}
\usepackage{soul}
\usepackage{dsfont}
\usepackage{caption}
\usepackage{lipsum}
\usepackage{thmtools,thm-restate}


\graphicspath{{./Figs/}}

\usepackage{color} 

\newcommand{\xc}[1]{{\color{blue}{\bf\sf #1}}}
\newcommand{\xinshi}[1]{{\color{black}{#1}}}

\theoremstyle{definition}

\usepackage{hyperref}


\usepackage[accepted]{icml2019}


\icmltitlerunning{Generative Adversarial User Model for RL Based Recommendation System}

\begin{document}

\twocolumn[
\icmltitle{Generative Adversarial User Model for \\
Reinforcement Learning Based Recommendation System}



\icmlsetsymbol{intern}{$\dagger$}
\icmlsetsymbol{gt1}{1}
\icmlsetsymbol{gt2}{2}
\icmlsetsymbol{gt3}{3}
\icmlsetsymbol{ant}{4}

\begin{icmlauthorlist}
\icmlauthor{Xinshi Chen}{gt1,intern}
\icmlauthor{Shuang Li}{gt2}
\icmlauthor{Hui Li}{ant}
\icmlauthor{Shaohua Jiang}{ant}
\icmlauthor{Yuan Qi}{ant}
\icmlauthor{Le Song}{gt3,ant}
\end{icmlauthorlist}

\icmlcorrespondingauthor{Xinshi Chen}{xinshi.chen@gatech.edu}

\icmlkeywords{Machine Learning, ICML}

\vskip 0.2in
]


\printAffiliationsAndNotice{\textsuperscript{$\dagger$}Work done partially during an internship at Ant Financial. \textsuperscript{1}School of Mathematics, \textsuperscript{2}School of Industrial and Systems Engineering, \textsuperscript{3}School of Computational Science and Engineering, Georgia Institute of Technology, Atlanta, Georgia, USA, \textsuperscript{4}Ant Financial, Hangzhou, China}

\begin{abstract}
 There are great interests as well as many challenges in applying reinforcement learning (RL) to recommendation systems. In this setting, an online user is the environment; neither the reward function nor the environment dynamics are clearly defined, making the application of RL challenging. 
 In this paper, we propose a novel model-based reinforcement learning framework for recommendation systems, where we develop a generative adversarial network to imitate user behavior dynamics and learn her reward function. Using this user model as the simulation environment, we develop a novel Cascading DQN algorithm to obtain a combinatorial recommendation policy which can handle a large number of candidate items efficiently. In our experiments with real data, we show this generative adversarial user model can better explain user behavior than alternatives, and the RL policy based on this model can lead to a better long-term reward for the user \xinshi{and higher click rate for the system.}
\end{abstract}

\section{Introduction}
        \setlength{\abovedisplayskip}{4pt}
        \setlength{\abovedisplayshortskip}{1pt}
        \setlength{\belowdisplayskip}{4pt}
        \setlength{\belowdisplayshortskip}{1pt}
        \setlength{\jot}{3pt}
        \setlength{\textfloatsep}{6pt}  

Recommendation systems have become a crucial part of almost all online service platforms. A typical interaction between the
system and its users is --- users are recommended a page of items
and they provide feedback, and then the system recommends a new
page of items. A common way of building recommendation systems is to estimate a model which minimizes the discrepancy between the model prediction and the \emph{immediate} user response according to some loss function. In other words, these models do not explicitly take into account the long-term user interest. However, user's interest can evolve over time based on what she observes, and the recommender's action may significantly influence such evolution. In some sense, the recommender is guiding users' interest by displaying particular items and hiding the rest. Thus, it is more favorable to design a recommendation strategy, such as one based on reinforcement learning (RL), which can take users' long-term interest into account. However, it is challenging to apply RL framework to recommendation system setting since the environment will correspond to the logged online user.

First, a user's interest (reward function) driving her behavior is typically unknown, yet it is critically important for the use of RL algorithms. In existing RL algorithms for recommendation systems, the reward functions are manually designed (e.g. $\pm 1$ for click/no-click) which may not reflect a user's preference over different items.

Second, model-free RL typically requires lots of interactions with the environment in order to learn a good policy. This is impractical in the recommendation system setting. An online user will quickly abandon the service if the recommendation looks random and do not meet her interests. Thus, to avoid the large sample complexity of the model-free approach, a model-based RL approach is more preferable.
In a related but a different setting where one wants to train a robot policy, recent works showed that model-based RL is much more sample efficient~\citep{NagabandiKahn17, DeisenrothFox15, IgnasiPieter18}. The advantage of model-based approaches is that potentially large amount of off-policy data can be pooled and used to learn a good environment dynamics model, whereas model-free approaches can only use expensive on-policy data for learning. However, previous model-based approaches are typically designed based on physics or Gaussian processes, and not tailored for complex sequences of user behaviors.

To address the above challenges, we propose a novel model-based RL framework for recommendation systems, where a user behavior model and the associated reward function are learned in unified mini-max framework, and then RL policies are learned using this model. Our main technical contributions are:
\begin{enumerate}[wide,nosep,nolistsep]
\vspace{-2mm}
    \item We develop a generative adversarial learning ({GAN}) formulation to model user behavior dynamics and recover her reward function. These two components are estimated simultaneously via a joint mini-max optimization algorithm. The benefits of our formulation are: (i) a better predictive user model can be obtained, and the reward function are learned in a consistent way with the user model; (ii) the learned reward allows later reinforcement learning to be carried out in a more principled way, rather than relying on manually designed reward; (ii) the learned user model allows us to perform model-based RL and online adaptation for new users to achieve better results.  
    \item Using this model as the simulation environment, we develop a cascading DQN algorithm to obtain a combinatorial recommendation policy. The cascading design of action-value function allows us to find the best subset of items to display from a large pool of candidates with time complexity linear in number of candidates. 
\end{enumerate}
\vspace{-1mm}
In our experiments with real data, we showed that this generative adversarial model is a better fit to user behavior in terms of held-out likelihood and click prediction. Based on the learned user model and reward, we show that the estimated recommendation policy leads to better cumulative long-term reward for the user. Furthermore, in the case of model mismatch, our model-based policy can also quickly adapt to the new dynamics with a much fewer number of user interactions compared to model-free approaches.

\vspace{-1mm}
\section{Related Work}

Commonly used recommendation algorithms use a simple user model. For instance, Wide\&Deep networks~\citep{ChengKocHarmsen16} and other methods such as XGBOOST~\citep{chen2016xgboost} and DFM~\citep{guo2017deepfm} based on logistic regression assume a user chooses each item independently; Collaborative competitive filtering~\citep{YanLonSmoEtal11b} takes into account the context of user's choice but assumes that user's choices in each page view are independent. 
Session-based RNN~\citep{HidKarBalTik16} and session-based KNN~\citep{jannach2017recurrent} improve upon previous approaches by modeling users' history, but this model does not recover a users' reward function and can not be used subsequently for reinforcement learning. Bandit based approaches, such as LinUCB~\citep{LiChuLanSch10}, can deal with adversarial user behaviors, but the reward is updated in a Bayesian framework and can not be directly in a reinforcement learning framework.

\citet{XiangyuLiangZhuoye18,zhao2018deep,zheng2018drn} used model-free RL for recommender systems, which may require many user interactions and the reward function is manually designed. Model-based reinforcement learning has been commonly used in robotics applications and resulted in reduced sample complexity to obtain a good policy~\citep{DeisenrothFox15,NagabandiKahn17,IgnasiPieter18}. However, user behaviors are sequences of discrete choices under a complex session context, very different from robotics models. Views resonating our work also emerge from literature~\cite{shi2019virtual} that user model estimation is essential for the success of RL in recommendation systems. 

\begin{figure}[!tb]
    \vspace{-2mm}
    \centering
    \includegraphics[width=0.95\columnwidth]{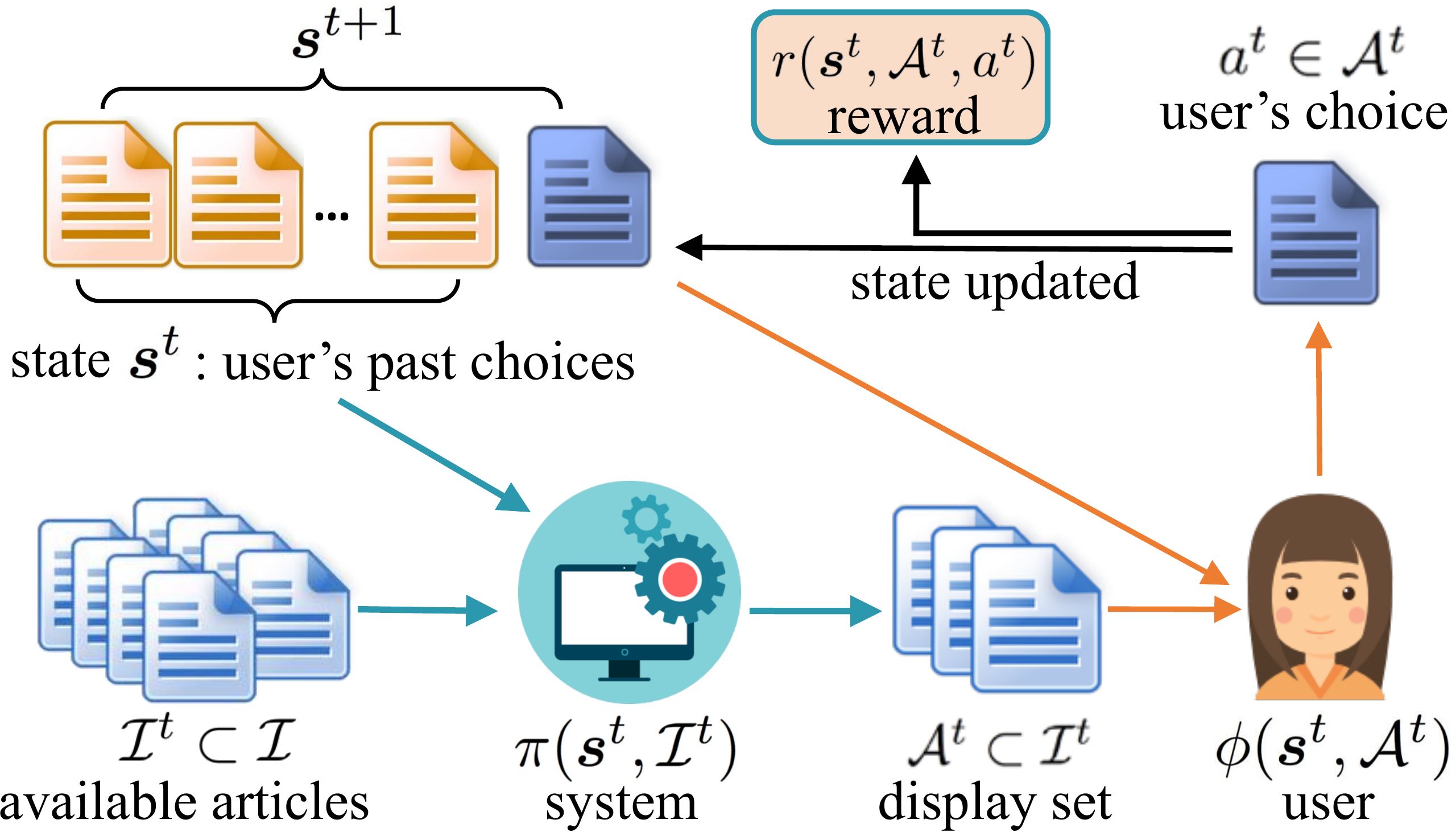}
    \vspace{-2mm}
    \caption{{Illustration of the interaction between a user and the recommendation system. Green arrows represent the recommender information flow and orange represents user's information flow.}}
    \label{fig:overall_framework}
\end{figure}

\vspace{-1mm}
\section{Setting and RL Formulation}

We focus on a simplistic yet typical setting where the recommendation system (RS) and its user interact as follows:

\begin{center}
\vspace{-1mm}
  \framebox{
  \parbox{0.8\columnwidth}{ 
  {\bf Setting:} {RS displays $k$ items in one page to a user. The user provides feedback by clicking on one or none of these items, and then the system recommends a new page of $k$ items.}    
  }}
\vspace{-1mm}
\end{center}

Our model can be easily extended to settings with more complex page views and user interactions.

Since reinforcement learning can take into account long-term reward, it holds the promise to improve users' long-term engagement with an online platform. In the RL framework, a recommendation system aims to find a policy $\pi(\vs, \gI)$ to choose from a set $\gI$ of items in user state $\vs$, such that the cumulative expected reward 
is maximized,
\begin{equation}
    \pi^\ast = \textstyle{\argmax_{\pi(\vs^t,\gI^t)}~\E\Big[\sum_{t=0}^{\infty} \gamma^t r(\vs^t, a^t)\Big]},    
        \label{eq:rl}
\end{equation}
where $\vs^0\sim p^0$, $\gA^t \sim \pi(\vs^t,\gI^t)$, $\vs^{t+1} \sim P(\cdot | \vs^t, \gA^t)$, $a^t \in \gA^t$. The overall RL framework for recommendation is illustrated in Figure~\ref{fig:overall_framework}. Several key aspects are as follows: 
\begin{enumerate}[nosep, nolistsep, wide]
    \item[(1)] {\bf Environment}: will correspond to a logged online user who can click on one of the $k$ items displayed by the recommendation system in each page view (or interaction);
    \item[(2)] {\bf State $\vs^t \in \gS$}: will correspond to an ordered sequence of a user's historical clicks; 
    \item[(3)] {\bf Action { $\gA^t \in {\gI^t \choose k}$}} of the recommender: will correspond to a subset of $k$ items chosen by the recommender from $\gI^t$ to display to the user. { ${\gI^t \choose k}$} means the set of all subsets of $k$ items of $\gI^t$, where $\gI^t \subset{\gI}$ are available items to recommend at time $t$. 
    \item[(4)] {\bf State Transition { $P(\cdot|\vs^t,\gA^t):\gS \times {\gI \choose k} \mapsto \gP(\gS)$}}: will correspond to a user behavior model which returns the transition probability for $\vs^{t+1}$ given previous state $\vs^t$ and the set of items $\gA^t$ displayed by the system. It is equivalent to the distribution $\phi(\vs^t, \gA^t)$ over a user's actions, which is defined in our user model in section~\ref{sec:user_model}.
    \item[(5)] {\bf Reward Function} $r${ $(\vs^t,\gA^t, a^t):\gS  \times {\gI \choose k} \times \gI \mapsto \sR$}: will correspond to a user's utility or satisfaction after making her choice { $a^t\in \gA^t$} in state $\vs^t$. Here we assume that the reward to the recommendation system is the same as the user's utility. Thus, a recommendation algorithm which optimizes its long-term reward is designed to satisfy the user in a long run. 
    One can also include the company's benefit to the reward, but we will focus on users'  satisfaction. 
    \item[(6)] {\bf Policy { $\gA^t \sim \pi(\vs^t, \gI^t):\gS\times 2^{\gI} \mapsto \gP({\gI \choose k})$}}: will correspond to a recommendation strategy which returns the probability of displaying a subset { $\gA^t$} of { $\gI^t$} in user state $\vs^t$.  
    \vspace{-4mm}
\end{enumerate}
{\bf Remark.} In the above mapping, {\it Environment, State} and {\it State Transition} are associated with the user, {\it Action} and {\it Policy} are associated with the recommendation system, and {\it Reward Function} is associated with both of them.
Here we use the notation $r(\vs^t, \gA^t, a^t)$ to emphasize the dependency of the reward on the recommendation action, as the user can only choose from the display set. However, the value of the reward is determined by the user's state and the clicked item once the item occurs in the display set { $\gA^t$}. In fact, $r(\vs^t, \gA^t, a^t) =r(\vs^t, a^t)\cdot \1(a^t\in \gA^t) $. Thus, in section~\ref{sec:user_model} where we discuss the user model, we simply denote $r(\vs^t, a^t)=r(\vs^t, \gA^t, a^t)$ and assume $a^t\in \gA^t$ is true.  

Since both the reward function and the state transition model are unknown, we need to learn them from data. Once they are learned, the optimal policy $\pi^\ast$ in~\eqref{eq:rl} can be estimated by repeated querying the model using algorithms such as Q-learning~\citep{Watkins89}. In the next two sections, we will explain our formulation for the user behavior model and the reward function, and design an efficient algorithm for learning the RL recommendation policy.

\section{Generative Adversarial User Model}

We propose a model to imitate users' sequential choices and discuss its parameterization and estimation. The formulation is inspired by imitation learning, a powerful tool for learning sequential decision-making policies from expert demonstrations~\citep{Abbeel2004ApprenticeshipLV,Ho2016ModelFreeIL,Ho2016GenerativeAI,Torabi2018BehavioralCF}.
We formulate a unified mini-max optimization to learn user behavior model and reward function simultaneously based on sample trajectories. 

\subsection{User Behavior As Reward Maximization}\label{sec:user_model}

We model user behavior based on two realistic assumptions. (i) Users are not passive. Instead, given a set of $k$ items, a user will make a choice to maximize her own reward. The reward $r$ measures how much she will be satisfied with or interested in an item. Alternatively, the user can choose not to click on any items. Then she will receive the reward of not wasting time on boring items. (ii) The reward depends not only on the selected item but also on the user's history. For example, a user may not be interested in {\it Taylor Swift}'s song at the beginning, but once she happens to listen to it, she may like it and then becomes interested in her other songs. Also, a user can get bored after listening to {\it Taylor Swift}'s songs repeatedly. In other words, a user's evaluation of the items varies in accordance with her personal experience.

To formalize the model, we consider both the clicked item and the state of the user as the inputs to the reward function $r(\vs^t, a^t)$, where the clicked item is the user's action $a^t$ and the user's history is captured in her state $\vs^t$ (non-click is treated as a special item/action). Suppose in session $t$, the user is presented with $k$ items $\gA^t = \{a_1, \cdots, a_k \}$ and their associated features $\{\vf^t_1,\cdots, \vf^t_k\}$ by the system. She will take an action $a^t \in \gA^t$ according to a strategy $\phi^*$ which can maximize her expected reward. More specifically, this strategy is a probability distribution over $\gA^t$, which is the result of the optimization problem below \\[1.5mm]
\fbox{
\parbox{0.96\columnwidth}{
\textbf{Generative User Model:}
\begin{equation}\label{eq:max}
    \phi^*(\vs^t,\gA^t) = \arg\max_{\phi \in \Delta^{k-1}} \E_{\phi} \left[r(\vs^t, a^t) \right] -R(\phi)/\eta,
\vspace{-1.2mm}
\end{equation}
}}\\[1.5mm]
where { $\Delta^{k-1}$} is the probability simplex, and $R(\phi)$ is a convex regularization function to encourage exploration, and $\eta$ controls the strength of the regularization. 

{\bf Model interpretation.} If we use the negative Shannon entropy as the regularizer, we can obtain an interpretation of our user model from the perspective of exploration-exploitation trade-off (See Appendix~\ref{app:proof} for a proof). 
\begin{restatable}{lemma}{primelemma}\label{lm:lemma1}
Let the regularization term in~\eqref{eq:max} be $R(\phi) = \sum_{i=1}^k \phi_i \log \phi_i$. Then the optimal solution $\phi^*$ for the problem in~\eqref{eq:max} has a closed form
\begin{equation}\label{eq:exp}
        \phi^\ast(\vs^t,\gA^t)_i =\exp(\eta r(\vs^t, a_i))/{\textstyle \sum_{a_j \in \gA^t}}\exp(\eta r(\vs^t, a_j)).\nonumber
\end{equation}
Furthermore, in each session $t$, the user's optimal policy $\phi^*$ is equivalent to the following discrete choice model where $\varepsilon^t$ follows a Gumbel distribution.
\begin{equation}\label{eq:gumbel}
        a^t ={\textstyle \argmax_{a \in \gA^t }} ~\eta\, r(\vs^t, a) + \varepsilon^t.
 \end{equation}
\end{restatable}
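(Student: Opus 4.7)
The plan is to decompose the lemma into its two natural parts and handle each with standard tools. For the first part (the closed-form softmax), I would set up the Lagrangian for the constrained concave program
\begin{equation*}
\max_{\phi \in \Delta^{k-1}} \sum_{i=1}^k \phi_i\, r(\vs^t, a_i) - \frac{1}{\eta}\sum_{i=1}^k \phi_i \log \phi_i,
\end{equation*}
with a single equality multiplier $\lambda$ for $\sum_i \phi_i = 1$ (the nonnegativity constraints will be inactive since the entropy term forces the interior). Differentiating in $\phi_i$ gives the first-order condition $r(\vs^t, a_i) - \tfrac{1}{\eta}(\log \phi_i + 1) - \lambda = 0$, which I would solve to obtain $\phi_i \propto \exp(\eta\, r(\vs^t, a_i))$. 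Imposing the simplex constraint then yields the stated softmax expression; concavity of the objective (the entropy is strictly concave and the linear expected-reward term is concave) guarantees this critical point is the unique global maximizer.

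For the second part (Gumbel equivalence), I would invoke the standard Gumbel-max identity: if $\varepsilon_1, \dots, \varepsilon_k$ are i.i.d.\ standard Gumbel, then
\begin{equation*}
\Pr\!\left(\arg\max_{i} \bigl(\eta\, r(\vs^t, a_i) + \varepsilon_i\bigr) = j \right) = \frac{\exp(\eta\, r(\vs^t, a_j))}{\sum_{i} \exp(\eta\, r(\vs^t, a_i))}.
\end{equation*}
I would prove this by conditioning on $\varepsilon_j = x$, using that the CDF of a standard Gumbel is $F(x) = \exp(-e^{-x})$, so $\Pr(\eta r_i + \varepsilon_i \le \eta r_j + x) = \exp(-e^{-(x + \eta r_j - \eta r_i)})$, multiplying across $i \ne j$, and integrating against the Gumbel density of $\varepsilon_j$. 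The exponents collapse to $-e^{-x} \sum_i e^{\eta(r_i - r_j)}$, and a single change of variable reduces the integral to the claimed ratio. Matching this probability to the softmax from part one gives the equivalence.

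The routine calculus in both parts is unlikely to pose real difficulty; the only step that demands care is the Gumbel integral, where bookkeeping of the exponent and the change of variables must be done cleanly to recover exactly $\exp(\eta r_j)/\sum_i \exp(\eta r_i)$. I would place the detailed Gumbel computation in the appendix (as the paper already does via \texttt{Appendix~\ref{app:proof}}) and keep the main-text write-up to the Lagrangian derivation plus a one-line invocation of the Gumbel-max identity.
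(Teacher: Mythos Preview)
Your proposal is correct and follows essentially the same route as the paper: the paper also writes out the expected-reward-plus-entropy objective and asserts the softmax optimizer (your Lagrangian just spells out the ``simple computation'' the paper invokes), and for the Gumbel part the paper likewise conditions on the noise of the candidate item, multiplies the Gumbel CDFs over the remaining indices, integrates against the Gumbel density, and finishes with the change of variable $y=e^{-\varepsilon}$. The only cosmetic difference is that you frame Part~1 via KKT conditions explicitly, whereas the paper states the optimizer directly.
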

This lemma makes it clear that the user greedily picks an item according to the reward function (exploitation), and yet the Gumbel noise $\varepsilon^t$ allows the user to explore other less rewarding items. Similar models have appeared in econometric choice models~\citep{Manski75,McFa73}.
The regularization parameter $\eta$ is an exploration-exploitation trade-off parameter. It can be easily seen that with a smaller $\eta$, the user is more exploratory. Thus, $\eta$ reveals a part of users' character. In practice, we simply set the value $\eta=1$ in our experiments, since it is implicitly learned in the reward $r$, which is a function of various features of a user.

{\bf Remark.} (i) Other regularization $R(\phi)$ can also be used in our framework, which may induce different user behaviors. In these cases, the relations between $\phi^*$ and $r$ are also different, and may not appear in the closed form. (ii) The case where the user does not click any items can be regarded as a special item which is always in the display set $\gA^t$. It can be defined as an item with zero feature vector, or, alternatively, its reward value can be defined as a constant to be learned. 

\subsection{Model Parameterization}
\label{sec:model_param}

We will represent the state $\vs^t$ as an embedding of the historical sequence of items clicked by the user before session $t$, and then we will define the reward function $r(\vs^t, a^t)$ based on the state and the embedding of the current action $a^t$.

First, we will define the state of the user as $\vs^t := h({\mF_\ast^{1:t-1} := [ \vf_\ast^{1},\cdots,\vf_\ast^{t-1} ]})$, where each $\vf_\ast^\tau \in \sR^d$ is the feature vector of the clicked item at session $\tau$ and $h(\cdot)$ is an embedding function. One can also define a truncated $M$-step sequence as {$\mF^{t-m:t-1}_\ast:=[\vf_\ast^{t-m},\cdots,\vf_\ast^{t-1}]$}. 
For the state embedding function $h(\cdot)$, we propose a simple and effective position weighting scheme. Let { $\mW \in \sR^{m\times n}$} be a matrix where the number of rows $m$ corresponds to a fixed number of historical steps, and each column corresponds to one set of importance weights on positions. Then the embedding function $h\in \sR^{dn\times 1}$ can be designed as 
\begin{equation}
    \label{eq:state_embedding}
    \hspace{-1mm}
    \vs^t = h(\mF^{t-m:t-1}_\ast) := vec [\, \sigma \left(\, \mF_*^{t-m:t-1} \mW + \mB \,\right)\, ], 
\end{equation}
where $\mB\in \sR^{d\times n}$ is a bias matrix, and $\sigma(\cdot)$ is a nonlinear activation such as ReLU and ELU, and $vec[\cdot]$ turns the input matrix into a long vector by concatenating the matrix columns.
Alternatively, one can also use an {LSTM} to capture the history. However, the advantage of the position weighting scheme is that the history embedding is produced by a shallow network. It is more efficient for forward-computation and gradient backpropagation. 
\begin{figure*}[!t]
  \vspace{-2.5mm}
  \setlength{\tabcolsep}{2pt}
    \begin{tabular}{ccc}
    \includegraphics[width=0.38\textwidth]{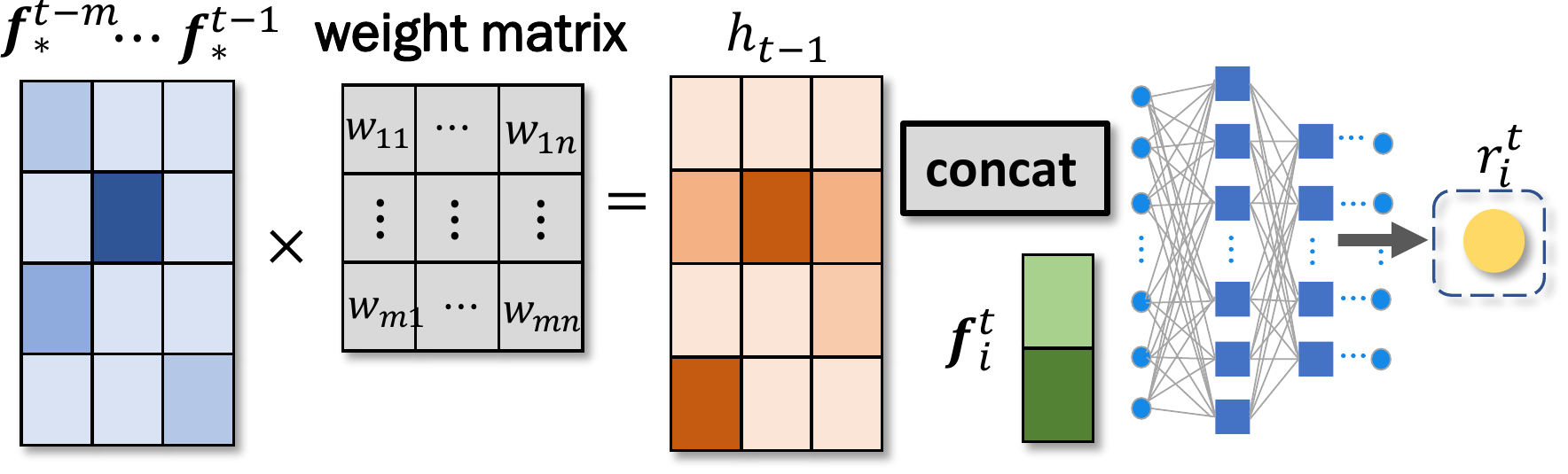} & 
    \includegraphics[width=0.35\textwidth]{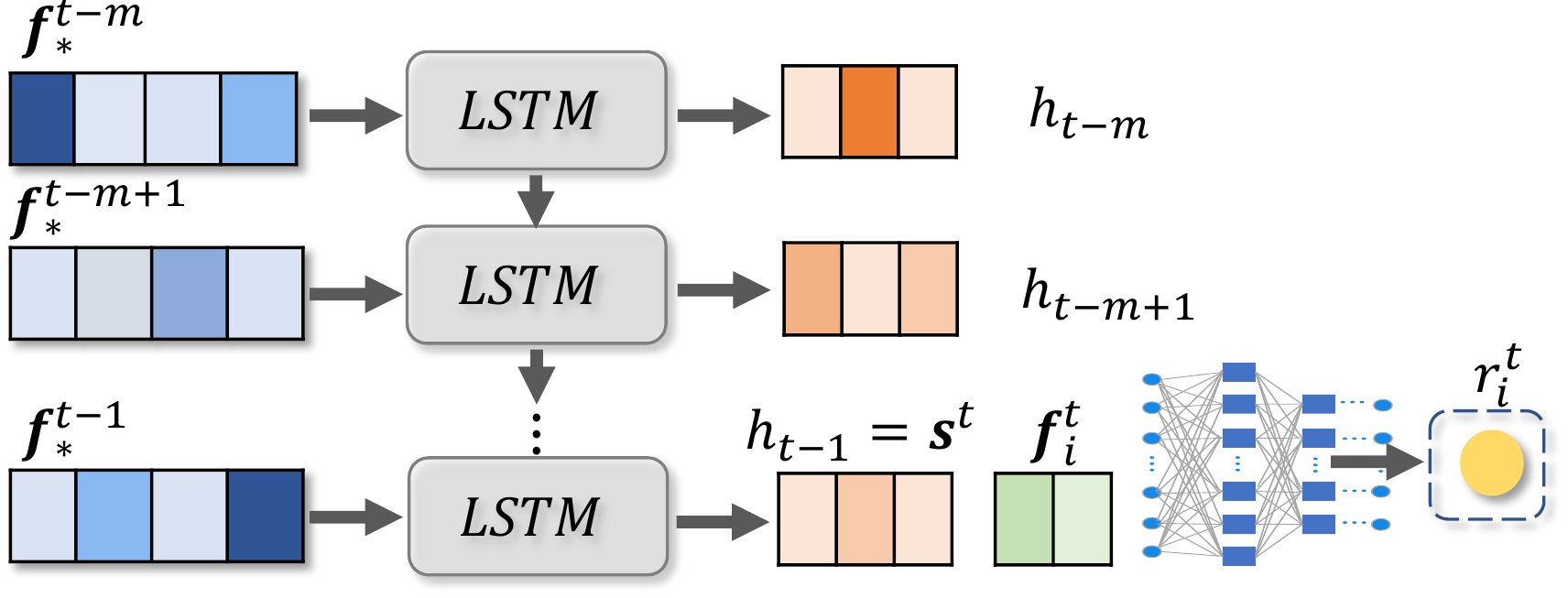} & 
    \includegraphics[width=0.25\textwidth]{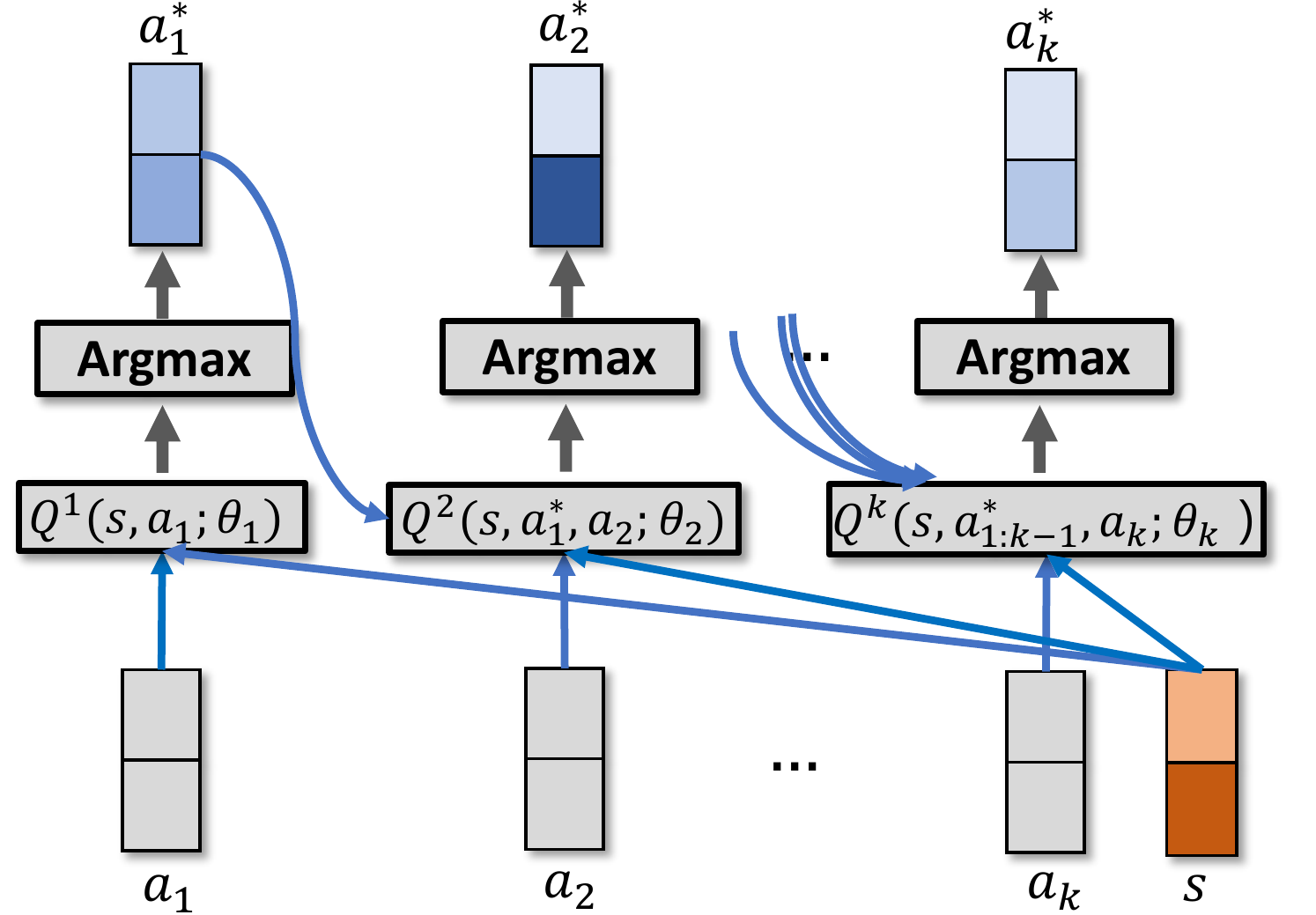} \\[-1mm]
    (a) & (b) & (c)
    \end{tabular}
    \vspace{-5mm}
  \caption{\footnotesize Architecture of our models parameterized by either (a) position weight ({PW}) or (b) {LSTM}. (c) Cascading Q-networks.}
    \label{fg:usermodel}
    \vspace{-4mm}
\end{figure*}

Next, we define the reward function and the user behavior model. 
A user's choice $a^t\in \gA^t$ will correspond to an item with feature $\vf_{a^t}^t$, which will be used to parameterize the reward function and user behavior model as 
\begin{equation*}
    r(\vs^t, a^t) := \vv^\top \sigma \Big(\, \mV \left[
    \begin{matrix}
        (\vs^t)^\top, 
        (\vf_{a^t}^t)^\top 
    \end{matrix}
    \right]^\top  + \vb \, \Big)~~\text{and}~~
\end{equation*}
\begin{equation*} 
    \phi(\vs,\gA^t) \propto \exp\Big( 
    {\vv'}^{\top} \sigma \Big(\, \mV' \left[
    \begin{matrix}
        (\vs^t)^\top, 
        (\vf_{a^t}^t)^\top 
    \end{matrix}
    \right]^\top + \vb' \, \Big)    
    \Big), 
\end{equation*}
where { $\mV,\mV' \in \sR^{\ell \times (dn+d)}$} are weight matrices, { $\vb,\vb' \in \sR^{ \ell}$} are bias vectors \xc{,} and { $\vv,\vv' \in \sR^{\ell}$} are the final regression parameters. See Figure \ref{fg:usermodel} for an illustration of the overall parameterization. For simplicity of notation, we will denote the set of all parameters in the reward function as $\theta$ and the set of all parameters in the user model as $\alpha$, and hence the notation $r_\theta$ and $\phi_\alpha$ respectively.    

\subsection{Generative Adversarial Training}
\label{sec:gan_training}

Both the reward function $r(\vs^t,a^t)$ and the behavior model $\phi(\vs^t, \gA^t)$ are unknown and need to be estimated from the data. The behavior model $\phi$ tries to mimic the action sequences provided by a real user who acts to maximize her reward function $r$. In analogy to generative adversarial networks, (i) $\phi$ acts as a generator which generates the user's next action based on her history, and (ii) $r$ acts as a discriminator which tries to differentiate the user's actual actions from those generated by the behavior model $\phi$. Thus, inspired by the {GAN} framework, we estimate $\phi$ and $r$ simultaneously via a mini-max formulation. More precisely, given a trajectory of $T$ observed actions {$\{a^1_{true},a^2_{true},\ldots,a^T_{true}\}$} of a user and the corresponding clicked item features $\{\vf_\ast^1, \vf_\ast^2, \ldots, \vf_\ast^T\}$, we learn $r$ and $\phi$ jointly by solving the following mini-max optimization \\[1mm]
\fbox{
\parbox{0.95\columnwidth}{
{\bf Generative Adversarial Training:}
 \begin{align}\label{eq:std-minmax}
        \min_{\theta } \max_{\alpha } \big( \E_{\phi_\alpha} &\big[{\textstyle\sum_{t=1}^T}r_{\theta}(\vs^t_{true}, a^t)\big] -R(\phi_\alpha)/\eta\big) \nonumber \\
        & -{\textstyle\sum_{t=1}^T} r_{\theta}(\vs^t_{true}, a^t_{true}),
\end{align}
}}\\[1mm]
where 
we use $\vs^t_{true}$ to emphasize that this is observed in the data. From the above optimization, one can see that the reward $r_\theta$ will extract some statistics from both real user actions and model user actions, and try to magnify their difference (or make their negative gap larger). In contrast, the user model $\phi_\alpha$ will try to make the difference smaller, and hence more similar to the real user behavior. Alternatively, the mini-max optimization can also be interpreted as a game between an adversary and a learner where the adversary tries to minimize the reward of the learner by adjusting $r_\theta$, while the learner tries to maximize its reward by adjusting $\phi_\alpha$ to counteract the adversarial moves. This gives the user behavior training process a large-margin training flavor, where we want to learn the best model even for the worst scenario.

For general regularization function $R(\phi_\alpha)$, the optimal solution in~\eqref{eq:std-minmax} does not have a closed form, and typically needs to be solved by alternatively updating $\phi_{\alpha}$ and $r_{\theta}$, e.g. 
\begin{equation}\footnotesize{\begin{cases}\alpha \gets \alpha + \gamma_1\nabla_{\alpha}\E_{\phi_{\alpha}}\Big[\sum_{t=1}^T r_{\theta} \Big]-\gamma_1\nabla_{\alpha}R(\phi_{\alpha})/\eta ;\\
\theta \gets \theta -\gamma_2 \E_{\phi_{\alpha}}\Big[\sum_{t=1}^T \nabla_{\theta} r_{\theta}\Big] +\gamma_2 \sum_{t=1}^T \nabla_{\theta} r_{\theta}.
\end{cases}}
\end{equation}
The process may be unstable due to the non-convexity nature of the problem. To stabilize the training process, we will leverage a special regularization for initializing the training process. More specifically, for entropy regularization, we can obtain a closed form solution to the inner-maximization for user behavior model, which makes the learning of reward function easy (See lemma~\ref{lm2:mle} below and Appendix~\ref{app:proof} for a proof). Once the reward function is learned for entropy regularization, it can be used to initialize the learning in the case of other regularization functions which may induce different user behavior models and final rewards.

\begin{restatable}{lemma}{secondlemma}\label{lm2:mle}
When $R(\phi) = \sum_{i=1}^k \phi_i \log \phi_i$, the optimization problem in~\eqref{eq:std-minmax} is equivalent to the following maximum likelihood estimation
\[
\max_{\theta \in \Theta}~\prod_{t=1}^T \dfrac{\exp(\eta r_{\theta}(\vs_{true}^t, a^t_{true}))}{\sum_{a^t \in \gA^t}\exp(\eta r_{\theta}(\vs_{true}^t, a^t))}.\]
 \end{restatable}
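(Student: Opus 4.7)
The plan is to eliminate the inner maximization in \eqref{eq:std-minmax} in closed form, and then show that what is left coincides with the negative log-likelihood of the softmax choice model. The key observation is that the inner problem
\[
\max_{\alpha}\,\E_{\phi_\alpha}\!\Big[{\textstyle\sum_{t=1}^T} r_{\theta}(\vs^t_{true}, a^t)\Big] - R(\phi_\alpha)/\eta
\]
decouples across the sessions $t$, because the states $\vs^t_{true}$ and the candidate sets $\gA^t$ are fixed by the data (they are not generated by $\phi_\alpha$), and the entropy regularizer $R(\phi)=\sum_i \phi_i \log \phi_i$ is separable in $t$. Thus for each $t$ we can treat the per-session distribution $\phi_\alpha(\vs^t_{true},\gA^t)$ as a free variable on the simplex $\Delta^{k-1}$ and solve $T$ independent maximizations.

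Next I would invoke Lemma~\ref{lm:lemma1} per session: the inner optimum is the softmax $\phi^\ast_i \propto \exp(\eta\,r_\theta(\vs^t_{true}, a_i))$, and its optimal value is the log-sum-exp,
\begin{equation*}
\max_{\phi\in\Delta^{k-1}} \E_{\phi}[r_\theta(\vs^t_{true}, a^t)] - \tfrac{1}{\eta}\sum_i \phi_i \log \phi_i \;=\; \tfrac{1}{\eta}\,\log {\textstyle\sum_{a\in\gA^t}} \exp\!\big(\eta\, r_{\theta}(\vs^t_{true},a)\big),
\end{equation*}
which is the standard Fenchel dual identity between negative entropy and the log-partition function. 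Plugging this value back into \eqref{eq:std-minmax} reduces the min-max to the single minimization
\begin{equation*}
\min_{\theta}\; \sum_{t=1}^T \Big[\tfrac{1}{\eta}\log {\textstyle\sum_{a\in\gA^t}}\exp(\eta\, r_{\theta}(\vs^t_{true},a)) \;-\; r_{\theta}(\vs^t_{true}, a^t_{true})\Big].
\end{equation*}

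To finish I would multiply by $\eta>0$ (which preserves the argmin), group the two terms inside the sum into a single log-ratio, and flip the sign:
\[
\min_\theta \;-\sum_{t=1}^T \log\frac{\exp(\eta\, r_{\theta}(\vs^t_{true}, a^t_{true}))}{\sum_{a\in\gA^t}\exp(\eta\, r_{\theta}(\vs^t_{true}, a))}
\;\;\Longleftrightarrow\;\;
\max_\theta \prod_{t=1}^T \frac{\exp(\eta\, r_{\theta}(\vs^t_{true}, a^t_{true}))}{\sum_{a\in\gA^t}\exp(\eta\, r_{\theta}(\vs^t_{true}, a))},
\]
which is precisely the claimed MLE. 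The only non-mechanical step is justifying that the inner max can be passed through to a per-session, nonparametric maximization on $\Delta^{k-1}$; this is where I expect a careful reader might object, and the justification relies on (i) separability in $t$ of both the reward expectation and the entropy regularizer, and (ii) the implicit assumption that the parameterization $\phi_\alpha$ is expressive enough to realize the softmax optimum at each $(\vs^t_{true},\gA^t)$. The remaining computations are standard convex-duality manipulations and require no further subtlety.
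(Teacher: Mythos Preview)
Your proposal is correct and follows essentially the same route as the paper: solve the inner maximization in closed form via Lemma~\ref{lm:lemma1} to obtain the log-sum-exp, then observe that the outer minimization is the negative log-likelihood. The one place where the paper is slightly more explicit is your point (ii): rather than assuming $\phi_\alpha$ is expressive enough, the paper invokes a ``no repeated pairs $(\vs^t_{true},\gA^t)$'' assumption so that an arbitrary mapping $\phi$ can be set independently to the per-session optimizer at each encountered input, which amounts to the same decoupling you describe.
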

 
\section{Cascading RL Policy for Recommendation}

\label{sec:rl_policy}

Using the estimated user behavior model $\phi$ and the corresponding reward function $r$ as the simulation environment, we can then use reinforcement learning to obtain a recommendation policy. The recommendation policy needs to choose from a {\it combinatorial action space} $\gI \choose k$, where each action is a subset of $k$ items chosen from a larger set $\gI$ of $K$ candidates.
Two challenges associated with this problem include the potentially high computational complexity of the combinatorial action space and the development of a framework for estimating the long-term reward (the Q function) from a combination of items. Our contribution is designing a novel cascading Q-networks to handle the combinatorial action space, and an algorithm to estimate the parameters by interacting with the GAN user model. 

\subsection{Cascading Q-Networks}
We will use the Q-learning framework where an optimal action-value function $Q^*(\vs,\gA)$ will be learned and satisfies $Q^*(\vs^t, \gA^t) =\E\big[r(\vs^t, \gA^t, a^t) + \gamma  {\textstyle\max_{\gA'\subset \gI}}~Q^*(\vs^{t+1}, \gA')\big]$, $a^t \in \gA^t$.  
Once the action-value function is learned, an optimal policy for recommendation can be obtained as
\begin{equation}\label{eq:policy}
        \pi^\ast(\vs^t, \gI^t)={\textstyle\argmax_{\gA^t\subset \gI^t}}~Q^*(\vs^t, \gA^t), 
\end{equation}
where $\gI^t \subset \gI$ is the set of items available at time $t$.
The action space contains {${K \choose k}$} many choices, which can be very large even for moderate $K$ (e.g. 1,000) and $k$ (e.g. 5). Furthermore, an item put in different combinations can have different probabilities of being clicked, which is indicated by the user model and is in line with reality. For instance, interesting items may compete with each other for a user's attention. Thus, the policy in~\eqref{eq:policy} will be very expensive to compute. To address this challenge, we will design not just one but a set of $k$ related Q-functions which will be used in a cascading fashion for finding the maximum in~\eqref{eq:policy}. 

Denote the recommender actions as $\gA=\{a_{1:k}\}\subset \gI $ and the optimal action as { $\gA^* =\{a_{1:k}^*\} =\arg\max_{\gA} Q^*(
\vs, \gA)$}. Our cascading Q-networks are inspired by the key fact:
\begin{equation}\label{eq:max_decomp}
\max_{a_{1:k}}Q^*(\vs,a_{1:k}) = \max_{a_1}\big( \max_{a_{2:k}}Q^*(\vs,a_{1:k}) \big).
\end{equation}
Also,  there is a set of mutually consistent {$Q^{1*},\ldots,Q^{k*}$}:\\[1mm]
\fbox{
\parbox{0.98\columnwidth}{
{\bf Cascading Q-Networks:}
\begin{align*}
    & a_1^* =\arg {\textstyle\max_{a_1}} \{ Q^{1*}(\vs, a_1):={\textstyle\max_{a_{2:k}}}Q^*(\vs,a_{1:k}) \},  \\
    & a_2^* =\arg {\textstyle\max_{a_2}} \{ Q^{2*}(\vs, a_1^*, a_2):={\textstyle\max_{a_{3:k}}}Q^*(\vs,a_{1:k}) \}, \\
    & \cdots \nonumber\\
    & a_k^* = \arg {\textstyle\max_{a_k}} \{ Q^{k*}(\vs, a_{1:k-1}^*, a_k):=Q^*(\vs,a_{1:k}) \}.  
\end{align*}
}}\\[1mm]
Thus, we can obtain an optimal action in $O${ $(k|\gI|)$} computations by applying these functions in a cascading manner. See Algorithm~\ref{alg:argmax_q} and Figure~\ref{fg:usermodel}(c) for a summary. However, this cascade of $Q^{j*}$ functions are usually not available and need to be estimated from the data. 

\subsection{Parameterization and Estimation} 

Each $Q^{j*}$ function is parameterized by a neural network
\begin{equation}
    \vq_j^\top \sigma \Big( \mL_j\,
        \big[
            \vs^\top,\,  
            \vf_{a_1^*}^\top,\, 
            \ldots,\,
            \vf_{a_{j-1}^*}^\top,\,
            \vf_{a_j}^\top
        \big]^\top
        +\, \vc_j
    \Big),~~\forall j,
\end{equation}
where { $\mL_j \in \sR^{\ell\times(dn+dj)}$}, { $\vc_j \in \sR^{\ell}$} and { $\vq_j \in \sR^{\ell}$} are the set { $\Theta_j$} of parameters, and we use the same embedding for the state $\vs$ as in~\eqref{eq:state_embedding}. Now the problem left is how we can estimate these functions. Note that the set of {$Q^{j*}$} functions need to satisfy a large set of constraints. At the optimal point, the value of $Q^{j*}$ is the same as $Q^*$ for all $j$, i.e., 
\begin{equation}
    \label{eq:constraints}
        Q^{j*}(\vs, a_1^*, \cdots, a_j^*) = Q^*(\vs,a_1^*,\cdots,a_k^*),\quad \forall j.
\end{equation}
It is not easy to strictly enforce these constraints, we take them into account in a soft and approximate way. That is, we define the loss as 
\begin{align}
\label{eq:loss}
  & \big(y- {Q^j} \big)^2,~\text{where} \nonumber \\
  & y= r(\vs^t, \gA^t, a^t) + \gamma {Q^{k}}(\vs^{t+1}, a_{1:k}^*; \Theta_k),~ \forall j.
\end{align}
All { ${Q^j}$} networks are fitting against the same target $y$. Then the parameters { $\Theta_k$} can be updated by performing gradient steps over the above loss. We note that in our experiments the set of learned { ${Q^j}$} networks satisfies the constraints nicely with a small error (Figure~\ref{fg:q_constraint}).

The overall cascading Q-learning algorithm is summarized in Algorithm~\ref{alg:dqn} in Appendix~\ref{app:algo}, where we employ the cascading $Q$ functions to search the optimal action efficiently. Besides, both the experience replay~\citep{MniKavSilGraetal13} and $\varepsilon$-exploration techniques are applied.

\begin{algorithm}[!t]
\caption{Recommend using {${Q^{j}}$} Cascades}
\label{alg:argmax_q}
\DontPrintSemicolon
  \SetKwFunction{argmaxQ}{Argmax~$Q$}
  \SetKwProg{Fn}{Function}{:}{}
  \SetKwFor{uFor}{For}{do}{}
  \SetKwFor{ForPar}{For all}{do in parallel}{}
  \SetKwComment{Comment}{$\triangleright$\ }{}
  \SetCommentSty{mycommfont}

  {Let {$\gA^*=\emptyset$} be empty, remove clicked items $\gI = \gA \setminus{\vs}$ }\;
    \uFor{ $j=1$ to $k$}{
        { $\displaystyle {a}_j^*=\arg{\textstyle\max_{a_j\in \gI \setminus {\gA^*}} }{Q^{j}}(\vs, {a}_{1:j-1}^*, a_j; \Theta_j)$}\;
        
        Update { ${\gA}^* = {\gA}^* \cup \{{a}_j^* \}$} \;
    }
    \KwRet { ${\gA}^* = ({a}_1^*,\cdots,{a}_k^*)$}
\end{algorithm}

\section{Experiments}

We conduct three sets of experiments to evaluate our generative adversarial user model (called {GAN} user model) and the resulting RL recommendation policy. Our experiments are designed to investigate the following questions: {\bf (1)} Can {GAN} user model lead to better user behavior prediction? {\bf (2)} Can {GAN} user model lead to higher user reward and click rate? and {\bf (3)} Can {GAN} user model help reduce the sample complexity of reinforcement learning?

{\bf Dataset and Feature Description.} We use 6 real-world {datasets}: {\bf (1) MovieLens} contains a large number of movie ratings, from which we randomly sample 1,000 active users. Each display set is simulated by collecting 39 movies released near the time the movie is rated. Movie features are collected from IMDB. Categorical and descriptive features are encoded as sparse and dense vectors respectively; {\bf (2) Last.fm} contains listening records from 359,347 users. Each display set is simulated by collecting 9 songs with the nearest time-stamp. {\bf (3) Yelp} contains users' reviews to various businesses. Each display set is simulated by collecting 9 businesses with the nearest location. {\bf (4) Taobao} contains the clicking and buying records of users in 22 days. We consider the buying records as positive events. {\bf (5) RecSys15 YooChoose} contains click-streams that sometimes end with 
purchase events. {\bf (6) Ant Financial News dataset} contains clicks records from 50,000 users for one month, involving dozens of thousands of news. On average each display set contains 5 news articles. It also contains user-item cross features which are widely used in this online platform.(More details in Appendix~\ref{app:dataset})

\subsection{Predictive Performance of User Model}\label{sec:experiment1}

To assess the predictive accuracy of GAN user model with position weight (GAN-PW) and LSTM (GAN-LSTM), we choose a series of most widely used or state-of-the-arts as the baselines, including: (1) W\&D-LR~\citep{ChengKocHarmsen16}, a wide \& deep model with logistic regression loss function; (2) CCF~\citep{YanLonSmoEtal11b}, an advanced collaborative filtering model which takes into account the context information in the loss function; we further augment it with wide \& deep feature layer (W\&D-CCF); (3) IKNN~\citep{hidasi2015session}, one of the most popular item-to-item solutions, which calculates items similarly according to the number of co-occurrences in sessions; (4) S-RNN~\citep{HidKarBalTik16}, a session-based RNN model with a pairwise ranking loss; (5) SCKNNC~\citep{jannach2017recurrent}, a strong methods which unify session based RNN and KNN by cascading combination; (6) XGBOOST~\citep{chen2016xgboost}, a parallel tree boosting; (7) DFM~\citep{guo2017deepfm} is a deep neural factorization-machine 
based on wide \& deep features.
S-RNN~\citep{HidKarBalTik16}, a session-based RNN model with a pairwise ranking loss; (8) SCKNNW~\citep{jannach2017recurrent} and (9) SCKNNC~\citep{jannach2017recurrent}, two methods which unify S-RNN and CKNN by weighted combination and cascading combination respectively; (10) XGBOOST~\citep{chen2016xgboost}, a parallel tree boosting, which is also known as GBDT and GBM. 

Top-$k$ precision (Prec@$k$) is employed as the evaluation metric. It is the proportion of top-$k$ ranked items at each page view that are actually clicked by the user, averaged across test page views and users. Users are randomly divided into train(50\%), validation(12.5\%) and test(37.5\%) sets. The results in Table~\ref{tb:user_model} show that {GAN} model performs significantly better than baselines. Moreover, {GAN-PW} performs nearly as well as {GAN-LSTM}, but it is more efficient to train. \emph{Thus we use {GAN-PW} for later experiments and refer to it as {GAN}.} 

\begin{table*}[t!]
\vspace{-4mm}
\caption{Comparison of predictive performances, where we use Shannon entropy for {\scriptsize GAN-PW} and {\scriptsize GAN-LSTM}.} 
\label{tb:user_model}
\vspace{-2mm}
\scriptsize
\setlength{\tabcolsep}{2.6pt}
\begin{center}
\begin{tabular}{c|cc|cc|cc|cc|cc|cc}
\hline
& \multicolumn{2}{|c}{(1) MovieLens} & \multicolumn{2}{|c}{(2) LastFM} & \multicolumn{2}{|c}{(3) Yelp} & \multicolumn{2}{|c}{(4) Taobao} & \multicolumn{2}{|c}{(5) YooChoose} & \multicolumn{2}{|c}{(6) Ant Financial} \\
\hline 
Model  & prec(\%)@1 & prec(\%)@2 & prec(\%)@1 & prec(\%)@2 & prec(\%)@1 & prec(\%)@2 & prec(\%)@1 & prec(\%)@2 
& prec(\%)@1 & prec(\%)@2 & prec(\%)@1 & prec(\%)@2 \\
\hline
IKNN & 38.8($\pm$1.9) & 40.3($\pm$1.9) & 20.4($\pm$0.6) & {32.5($\pm$1.4)} & 57.7($\pm$1.8) & 73.5($\pm$1.8) & 32.8($\pm$2.6) & 46.6($\pm$2.6) & 39.3($\pm$1.5) & 69.8($\pm$2.1) & 20.6($\pm$0.2) & 32.1($\pm$0.2) \\
S-RNN & 39.3($\pm$2.7) & 42.9($\pm$3.6) & 9.4($\pm$1.6) & 17.4($\pm$0.9) & 67.8($\pm$1.4) & 73.2($\pm$0.9) & 32.7($\pm$1.7) & 47.0($\pm$1.4) & 41.8($\pm$1.2) & 69.9($\pm$1.9) & 32.2($\pm$0.9) & 40.3($\pm$0.6) \\
SCKNNC & 49.4($\pm$1.9) & 51.8($\pm$2.3) & {21.4($\pm$0.5)} & 26.1($\pm$1.0) & 60.3($\pm$4.5) & 71.6($\pm$1.8) & 35.7($\pm$0.4) & 47.9($\pm$2.1) & 40.8($\pm$2.5) & 70.4($\pm$3.8) & 34.6($\pm$0.7) & 43.2($\pm$0.8) \\
XGBOOST & 66.7($\pm$1.1) & 76.0($\pm$0.9) & 10.2($\pm$2.6) & 19.2($\pm$3.1) & 64.1($\pm$2.1) & 79.6($\pm$2.4) & 30.2($\pm$2.5) & { 51.3($\pm$2.6)}  & { 60.8}($\pm$0.4) & {80.3}($\pm$0.4) & 41.9($\pm$0.1) & 65.4($\pm$0.2) \\
DFM & 63.3($\pm$0.4) & 75.9($\pm$0.3) & 10.5($\pm$0.4) & 20.4($\pm$0.1) & 72.1($\pm$2.1) & 80.3($\pm$2.1) & 30.1($\pm$0.8) & { 48.5($\pm$1.1)}  & {\bf 61.3}($\pm$0.3) & {\bf 82.5}($\pm$1.5) & 41.7($\pm$0.1) & 64.2($\pm$0.2) \\
W\&D-LR & 61.5($\pm$0.7) &73.8($\pm$1.2) & 7.6($\pm$2.9) & 16.6($\pm$3.3)  & 62.7($\pm$0.8) & 86.0($\pm$0.9) &  34.0($\pm$1.1) & 54.6($\pm$1.5)   & 51.9($\pm$0.8) & 75.8($\pm$1.5) & 37.5($\pm$0.2) & 60.9($\pm$0.1) \\
W\&D-CCF &65.7($\pm$0.8)& 75.2($\pm$1.1) & 15.4($\pm$2.4) & 25.7($\pm$2.6)  & {\bf 73.2}($\pm$1.8) & 88.1($\pm$2.2) & 34.9($\pm$1.1) & 53.3($\pm$1.3) & 52.1($\pm$0.5) & 76.3($\pm$1.5) & 37.7($\pm$0.1) & 61.1($\pm$0.1) \\
\hline
{GAN-PW} &{66.6}($\pm$0.7) & {75.4}($\pm$1.3) & {\bf 24.1}($\pm$0.8) & {\bf 34.9}($\pm$0.7) &72.0($\pm$0.2) & {\bf 92.5}($\pm$0.5) & {34.7}($\pm$0.6) & {54.1}($\pm$0.7) & {52.9($\pm$0.7)}  & {75.7($\pm$1.4)}  &{41.9}($\pm$0.1) &{65.8}($\pm$0.1)  \\
{GAN-LSTM} & {\bf 67.4}($\pm$0.5) & {\bf 76.3}($\pm$1.2) & {24.0}($\pm$0.9) & {34.9}($\pm$0.8)  & {73.0}($\pm$0.2) & 88.7($\pm$0.4) & {\bf 35.9}($\pm$0.6)  & {\bf 55.0}($\pm$0.7) & {52.7($\pm$0.3)} & {75.9($\pm$1.2)} & {\bf 42.1}($\pm$0.2) & {\bf 65.9}($\pm$0.2) \\
\hline 
\end{tabular}
\end{center}
\vspace{-4mm}
\end{table*}

We also tested different types of regularization (Table~\ref{tb:user_model2}). In general, Shannon entropy performs well and it is also favored for its closed form solution. However, on the Yelp dataset, we find that $L_2$ regularization $R(\phi) = \|\phi\|_2^2$ leads to a better user model. It is noteworthy that the user model with $L_2$ regularization is trained with Shannon entropy initialization scheme proposed in section~\ref{sec:gan_training}. 

{\begin{table}[ht!]
\vspace{-3mm}
\begin{center}
    \caption{{GAN-LSTM user model with SE (Shannon entropy) versus $L_2$ regularization on Yelp dataset. pr is the short for prec(\%).}}
\label{tb:user_model2}
\small
\setlength{\tabcolsep}{5pt}
\begin{tabular}{l|cc|cc|cc}
\hline
\multicolumn{1}{c}{$ $}&\multicolumn{2}{|c|}{Split 1}&\multicolumn{2}{c|}{Split 2}&\multicolumn{2}{c}{Split 3}\\
\hline
Model  & pr@1 & pr@2 & pr@1 & pr@2 & pr@1 & pr@2\\
\hline
{SE}& 73.1& 88.8 & 72.8 &89.0 & 73.1 & 88.2 \\
{$L_2$}& {\bf 73.5} & {\bf 89.0} & {\bf 78.8} & {\bf 91.5} & {\bf 76.1} & {\bf 91.1} \\
\hline
\end{tabular}
\end{center}
\vspace{-2mm}
\end{table}}

An interesting result on Movielens is shown in Figure~\ref{fg:traj} (see Appendix~\ref{app:exp_usermodel} for similar figures). It shows {GAN} performs much better as time goes by, while the items predicted by {W\&D-CCF} are concentrated on several categories. This indicates a drawback of static models --- it fails to capture user interest evolution.

\begin{figure}[t!]
    \vspace{-2mm}
\centering
    \includegraphics[width=0.9\columnwidth]{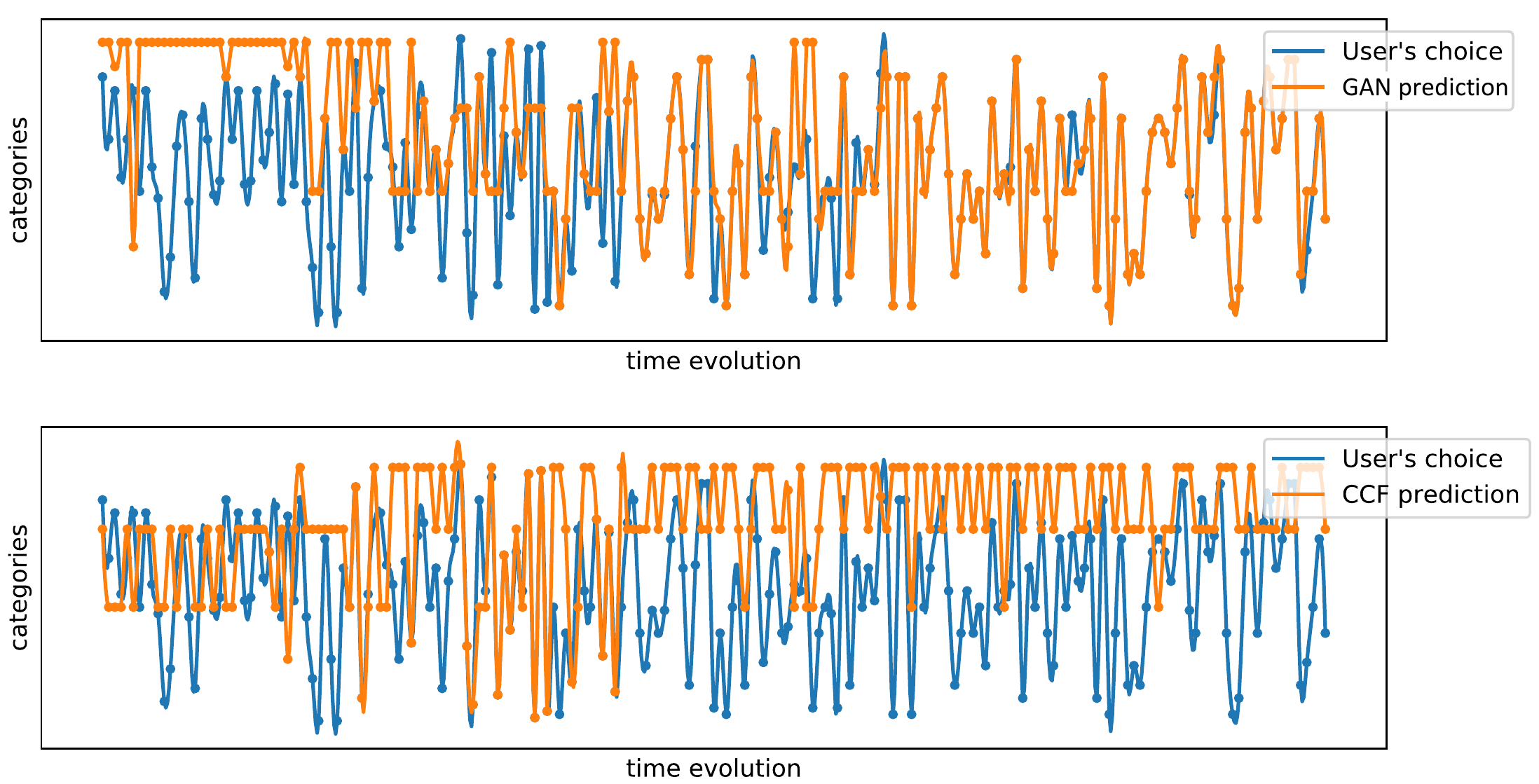}
    \vspace{-2mm}
    \caption{Comparison of the true trajectory (blue) of a user's choices, the simulated trajectory predicted by {GAN} model (orange curve in upper sub-figure) and the simulated trajectory predicted by W\&D-CCF (the orange curve in the lower sub-figure). $Y$-axis represents 80 categories of movies. Each data point $(t, c)$ represents time step $t$ and the category $c$ of the clicked item.
    } \label{fg:traj}
\end{figure}

\subsection{Recommendation Policy Based on User Model}\label{sec:experiment2}

With a learned user model, we can immediately derive a greedy policy to recommend $k$ items with the highest estimated likelihood. We will compare the strongest baseline methods {\bf   W\&D-LR, W\&D-CCF} and {\bf {GAN-Greedy}} in this setting. Furthermore, we will learn an RL policy using the cascading Q-networks from section~\ref{sec:rl_policy} ({\bf {GAN-CDQN}}) and compare it with two RL methods: a cascading Q-network trained with $\pm 1$ reward ({\bf {GAN-RWD1}}), and an additive Q-network policy~\citep{He2016DeepRL}, {$Q(\vs,a_1,\cdots,a_k) := \sum_{j=1}^kQ(\vs,a_j)$} trained with learned reward ({\bf {GAN-GDQN}}).

Since we cannot perform online experiments at this moment, we use collected data from the online news platform to fit a user model, and then use it as a test environment. To make the experimental results trustful and solid, we fit the test model based on a randomly sampled test set of 1,000 users and keep this set isolated. 
The RL policies are learned from another set of 2,500 users without overlapping the test set. The performances are evaluated by two metrics: {(1) \bf Cumulative reward}: For each recommendation action, we can observe a user's behavior and compute her reward $r(\vs^t,a^t)$ using the test model. 
Note that we never use the reward of test users when we train the RL policy. The numbers shown in Table~\ref{tb:policy_compare2} are the cumulative rewards averaged over time horizon first and then averaged over all users. It can be formulated as { $\frac{1}{N}\sum_{u=1}^{N} \frac{1}{T}\sum_{t=1}^T$}$ r^t_u$, where $r^t_u$ is the reward received by user $u$ at time $t$. (2) {\bf CTR (click through rate)}: it is the ratio of the number of clicks and the number of steps it is run. The values displayed in Table~\ref{tb:policy_compare2} are also averaged over 1,000 test users. 

\begin{table}[t!]
\vspace{-2.5mm}
\begin{center}
    \caption{{Comparison of recommendation performance.}}
\label{tb:policy_compare2}
\scriptsize
\setlength{\tabcolsep}{4.5pt}
\begin{tabular}{l|ll|ll}
\hline
\multicolumn{1}{c}{$ $}&\multicolumn{2}{c|}{$k=3$}&\multicolumn{2}{c}{$k=5$}\\
 \hline
model  & \multicolumn{1}{c}{reward} &\multicolumn{1}{c|}{{\footnotesize ctr}}&\multicolumn{1}{c}{reward} &\multicolumn{1}{c}{{\footnotesize ctr}}
\\ \hline
{W\&D-LR} &  14.46($\pm$0.42) & 0.46($\pm$0.01)& 15.18($\pm$0.38)& 0.48($\pm$0.01) \\
{W\&D-CCF} & 19.93($\pm$1.09) & 0.62($\pm$0.03)& 20.94($\pm$1.03)& 0.65($\pm$0.03)\\
{{GAN-Greedy}} & 21.37($\pm$1.24)  & 0.67($\pm$0.04) & 22.97($\pm$1.22) & 0.71($\pm$0.03)\\
\hline
{{GAN}}{\scriptsize-RWD1} &  {22.17}($\pm$1.07) & 
{0.68}($\pm$0.03) & {25.15}($\pm$1.04) & {\bf 0.78}($\pm$0.03)\\
{{GAN}}{\scriptsize-GDQN} &  {23.60}($\pm$1.06) & 
{0.72}($\pm$0.03) & {23.19}($\pm$1.17) & {0.70}($\pm$0.03)\\
{{GAN}}{\scriptsize-CDQN} &  {\bf 24.05}($\pm$0.98) & {\bf 0.74}($\pm$0.03) & {\bf 25.36}($\pm$1.10) & {0.77}($\pm$0.03)\\
{DQN}{\scriptsize-Off} & 20.31($\pm$0.14) & 0.63($\pm$0.01) & 21.82($\pm$0.08) & 0.67($\pm$0.01)
\\
\hline
\end{tabular}
\end{center}
\end{table}

Experiments with different numbers of items in each page view are conducted and the results are summarized in Table~\ref{tb:policy_compare2}. Since users' behaviors are not deterministic, each policy is evaluated repeatedly for 50 times on test users. The results show that: (1) Greedy policy built on {GAN} model is significantly better than the policies built on other models. (2) RL policy learned from {GAN} is better than the greedy policy. (3) Although {GAN-CDQN} is trained to optimize the cumulative reward, the recommendation policy also achieves a higher CTR compared to {GAN-RWD1} which directly optimizes $\pm 1$ reward. The learning of {GAN-CDQN} may have benefited from the well-known reward shaping effects of the learned continuous reward~\citep{Mataric1994RewardFF,Ng1999PolicyIU,Matignon2006RewardFA}. (4) While the computational cost of {GAN-CDQN} is about the same as that of {GAN-GDQN} (both are linear in the total number of items), our proposed {GAN-CDQN} is a more 
flexible parametrization and achieved better results. 

Since Table~\ref{tb:policy_compare2} only shows average values taken over test users, we compare the policies in user level and the results are shown in figure~\ref{fg:policy_compare_rwd}. 
{GAN-CDQN} policy results in higher averaged cumulative reward for most users. A similar figure which compares the CTR is deferred to Appendix~\ref{app:experiment}. Figure~\ref{fg:q_constraint} shows that the learned cascading Q-networks satisfy constraints in \eqref{eq:constraints} well when $k=5$. 

\begin{figure*}[t!]
\centering
\includegraphics[width=\textwidth]{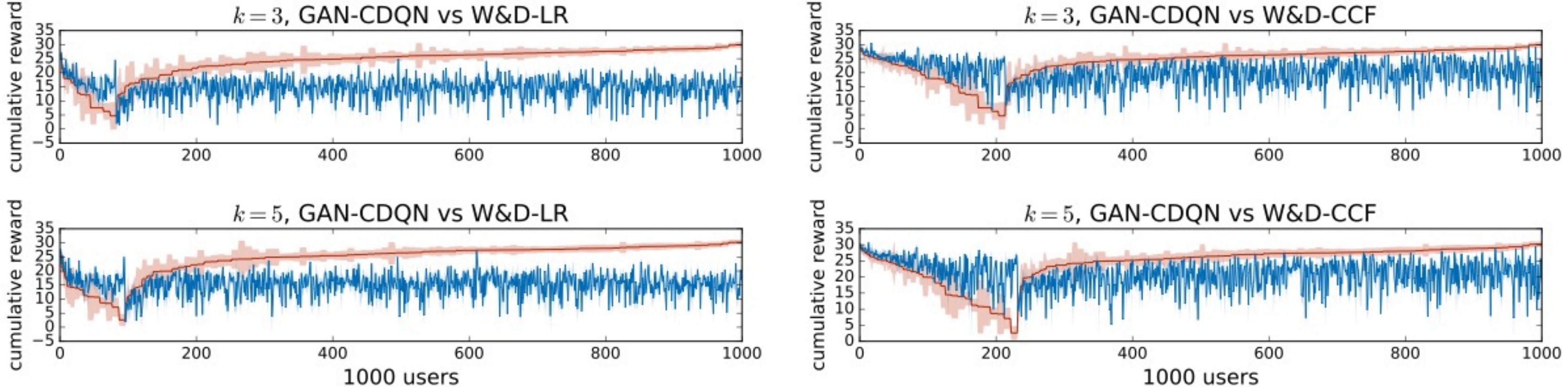} 
\vspace{-6mm}
\caption{Cumulative rewards among 1,000 users under the recommendation policies based on different user models. The experiments are repeated for 50 times and the standard deviation is plotted as the shaded area.}
\label{fg:policy_compare_rwd}
\vspace{-2mm}
\end{figure*}

\subsection{User Model Assisted Policy Adaptation}\label{sec:experiment3}

Former results in section~\ref{sec:experiment1} and \ref{sec:experiment2} have demonstrated that {GAN} is a better user model and RL policy based on it can achieve higher CTR compared to other user models, but this user model may be misspecified. In this section, we show that our {GAN} model can help an RL policy to quickly adapt to a new user. The RL policy assisted by {GAN} user model is compared with other policies that are learned from and adapted to online users: (1) {\bf CDQN with {GAN}}: cascading Q-networks which are first trained using the learned {GAN} user model from other users and then adapted online to a new user using MAML~\citep{FinAbbLev17}. (2) {\bf CDQN model free}: cascading Q-networks without pre-trained by the {GAN} model. It interacts with and adapts to online users directly. (3) {\bf LinUCB}: a classic contextual bandit algorithm which assumes adversarial user behavior. We choose its stronger version - LinUCB with hybrid linear models~\citep{LiChuLanSch10} - to compare with.

The experiment setting is similar to section~\ref{sec:experiment2}. All policies are evaluated on a set of 1,000 test users associated with a test model. 
Two sets of results corresponding to different sizes of display set are plotted in Figure~\ref{fg:experiment3}. It shows how the CTR increases as each policy interacts with and adapts to users over time. In fact, the performances of users' cumulative reward according to different policies are also similar, and the corresponding figure is deferred to Appendix~\ref{app:exp_policy3}.

The results show that the CDQN policy pre-trained over a {GAN} user model can quickly achieve a high CTR even when it is applied to a new set of users (Figure~\ref{fg:experiment3}). Without the user model, CDQN can also adapt to the users during its interaction with them. However, it takes around 1,000 iterations (i.e., 100,000 interactive data points) to achieve similar performance as the CDQN policy assisted by {GAN} user model. LinUCB(hybrid) is also capturing users' interests during its interaction with users. Similarly, it takes too many interactions. In Appendix~\ref{app:exp_policy3}, another figure is attached to compare the cumulative reward received by the user instead of CTR. Generally speaking, {GAN} user model provides a dynamical environment for RL policies to interact with. It helps the policy achieve a more satisfying status before applying to online users. 

\begin{figure}[!t]
    \centering
    \includegraphics[width=0.98\columnwidth]{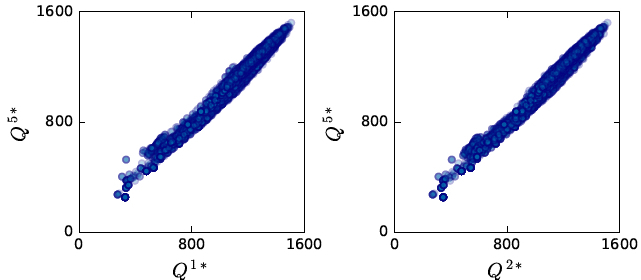}
\vspace{-2mm}    
\caption{Each scatter-plot compares $Q^{j^*}$ with $Q^{5*}$ values in~\eqref{eq:constraints} evaluated at the same set of $k$ recommended items. In the ideal case, all scattered points should lie along the diagonal.}
\label{fg:q_constraint}
\end{figure}

\begin{figure}[!t]
    \centering
    \vspace{-2mm}    
    \includegraphics[width=1.02\columnwidth]{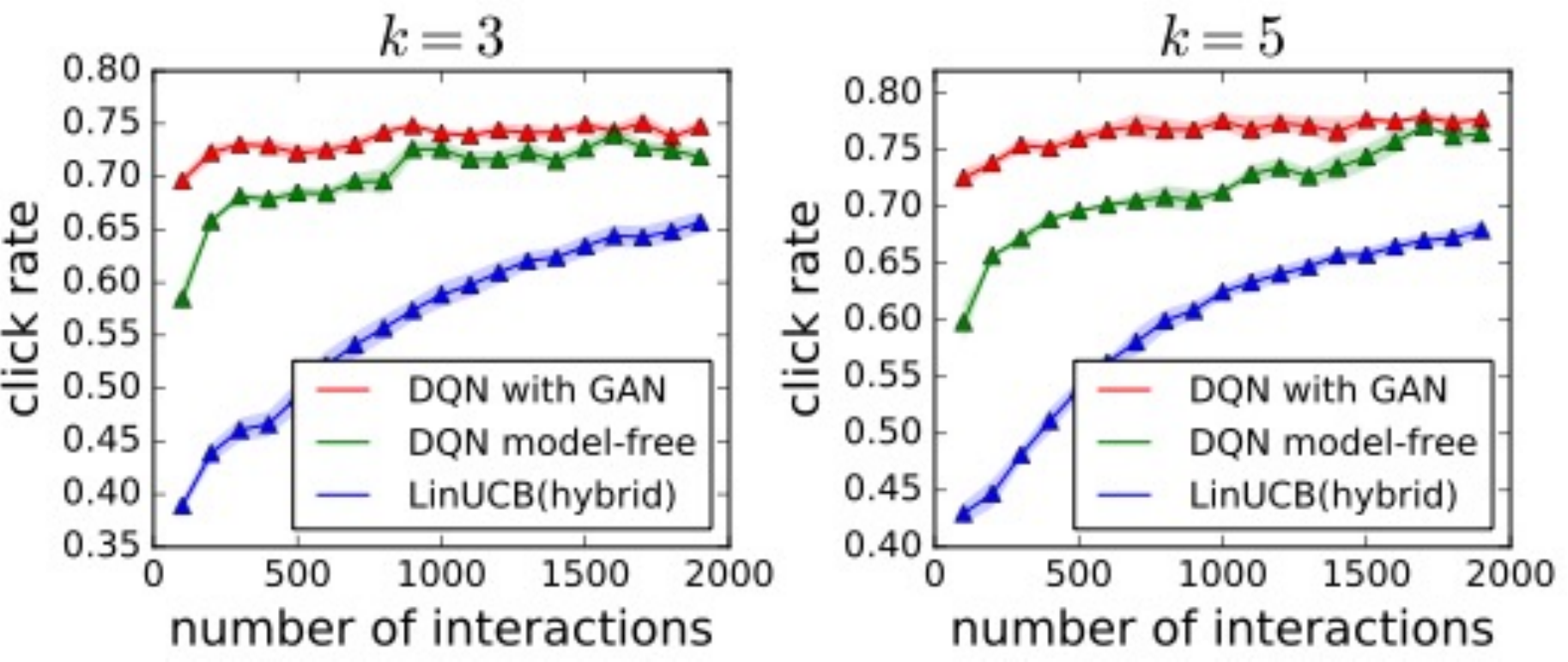}
    \vspace{-6mm}
\caption{Comparison of the averaged click rate averaged over 1,000 users under different recommendation policies. $X$-axis represents how many times the recommender interacts with online users. $Y$-axis is the click rate. Each point $(x,y)$ means the click rate $y$ is achieved after $x$ times of user interactions.}
\label{fg:experiment3}
\end{figure}

\vspace{-1mm}    
\section{Conclusion and Future Work}
\vspace{-1mm}    

 We proposed a novel model-based reinforcement learning framework for recommendation systems, where we developed a GAN formulation to model user behavior dynamics and her associated reward function. Using this user model as the simulation environment, we develop a novel cascading Q-network for combinatorial recommendation policy which can handle a large number of candidate items efficiently. Although the experiments show clear benefits of our method in an offline and realistic simulation setting, even stronger results could be obtained via future online A/B testing. 


\section*{Acknowledgement}
This project was supported in part by NSF IIS-1218749, NIH
BIGDATA 1R01GM108341, NSF CAREER IIS-1350983,
NSF IIS-1639792 EAGER, NSF IIS-1841351 EAGER, NSF CNS-1704701, ONR
N00014-15-1-2340, Intel ISTC, NVIDIA, Google and Amazon
AWS. 

\bibliographystyle{icml2019}

\newpage

\appendix

\onecolumn

\section{Lemma}\label{app:proof}
\subsection{Proof of lemma~\ref{lm:lemma1}} 
\primelemma*
\begin{proof}
First, recall the problem defined in~\eqref{eq:max}:
\[
\phi^*(\vs^t,\gA^t) = \arg\max_{\phi \in \Delta^{k-1}} \E_{\phi} \left[r(\vs^t, a^t) \right] -\frac{1}{\eta}R(\phi).
\]
Denote $\phi^t = \phi(\vs^t,\gA^t)$. Since $\phi$ can be an arbitrary mapping (i.e., $\phi$ is not limited in a specific parameter space), $\phi^t$ can be an arbitrary vector in $\Delta^{k-1}$. Recall the notation $\gA^t = \{a_1,\cdots, a_k\}$. Then the expectation taken over random variable $a^t\in \gA^t$ can be written as
\begin{equation}\label{eq:lm1_1}
     \E_{\phi} \left[r(\vs^t, a^t) \right] -\frac{1}{\eta}R(\phi) =\sum_{i=1}^k \phi_i^tr(\vs^t, a_i) -\frac{1}{\eta} \sum_{i=1}^k \phi_i^t \log \phi_i^t.
\end{equation}
By simple computation, the optimal vector $\phi^{t*}\in \Delta^{k-1}$ which maximizes~\eqref{eq:lm1_1} is
\begin{equation}\label{eq:lm1_2}
    \phi_i^{t*} =\dfrac{\exp(\eta r(\vs^t, a_i))}{\sum_{j=1}^{k}\exp(\eta r(\vs^t, a_j))},
\end{equation}
which is equivalent to~\eqref{eq:max}. Next, we show the equivalence of~\eqref{eq:lm1_2} to the discrete choice model interpreted by~\eqref{eq:gumbel}.

The cumulative distribution function for the Gumbel distribution is $F(\varepsilon;\alpha) = \mathbb{P}[\varepsilon \leqslant \alpha ] = e^{-e^{-\alpha}}$ and the probability density is $f(\varepsilon) = e^{-e^{-\varepsilon}}e^{-\varepsilon} $. Using the definition of the Gumbel distribution, the probability of the event $[a^t = a_i]$ where $a^t$ is defined in~\eqref{eq:gumbel} is 
\begin{align*}
    P_i :=  \mathbb{P} \Big[a^t = a_i\Big] &= \mathbb{P} \Big[ \eta r(\vs^t,a_i) +\varepsilon_i \geqslant \eta r(\vs^t,a_j) +\varepsilon_j ,  \text{ for all } i\neq j \Big]\\
    & =  \mathbb{P} \Big[\varepsilon_j \leqslant\varepsilon_i + \eta r(\vs^t,a_i)-\eta r(\vs^t,a_j),  \text{ for all } i\neq j \Big].
\end{align*}
Suppose we know the random variable $\varepsilon_i$. Then we can compute the choice probability $P_i$ conditioned on this information. Let $B_{ij} = \varepsilon_i +\eta r(\vs^t,a_i)-\eta r(\vs^t,a_j) $ and $P_{i|\mathcal{E}}$ be the conditional probability; then we have
\[
P_{i|\varepsilon_i} = \prod_{i \neq j} \mathbb{P}[\varepsilon_{j} \leqslant B_{ij}] = \prod_{i \neq j} e^{-e^{-B_{ij}}}.
\]
In fact, we only know the density of $\varepsilon_i$. Hence, using the Bayes theorem, we can express $P_i$ as
\begin{align*}
P_i & = \int_{-\infty}^{\infty} P_{i|\varepsilon_i} f(\varepsilon_i) \rd \varepsilon_i = \int_{-\infty}^{\infty} \prod_{i \neq j} e^{-e^{-B_{ij}}} f(\varepsilon_i) \rd \varepsilon_i \\
&  = \int_{-\infty}^{\infty} \prod_{j=1}^k e^{-e^{-B_{ij}}}   e^{e^{-\varepsilon_i}} e^{-e^{-\varepsilon_i}}e^{-\varepsilon_i} \rd \varepsilon_i= \int_{-\infty}^{\infty} \Big( \prod_{j=1}^k e^{-e^{-B_{ij}}} \Big)  e^{-\varepsilon_i} \rd \varepsilon_i
\end{align*}
Now, let us look at the product itself.
\begin{align*}
\prod_{j=1}^k e^{-e^{-B_{ij}}} & = \exp\Big( -\sum_{j=1}^k e^{-B_{ij}}\Big) \\
& = \exp \Big( - e^{-\varepsilon_i }\sum_{j=1}^k e^{- (\eta r(\vs^t,a_i)-\eta r(\vs^t,a_j) )} \Big)
\end{align*}
Hence 
\[
P_i = \int_{-\infty}^{\infty} \exp(-e^{-\varepsilon_i } Q ) e^{-\varepsilon_i} \rd \varepsilon_i
\]
    where $Q = \sum_{j=1}^k e^{- (\eta r(\vs^t,a_i)-\eta r(\vs^t,a_j) )}  = Z/\exp(\eta r(\vs^t,a_i) )$. 

Next, we make a change of variable $y = e^{-\varepsilon_i}$. The Jacobian of the inverse transform is $J = \frac{\rd \varepsilon_i}{\rd y} = -\frac{1}{y}$. Since $y>0$, the absolute of Jacobian is $|J| = \frac{1}{y}$. Therefore, 
\begin{align*}
P_i & = \int_{0}^{\infty} \exp(-Q y) y |J| \rd y=\int_{0}^{\infty} \exp(-Q y)\rd y      \\
& = \frac{1}{Q} = \frac{1}{\exp(-\eta r(\vs^t,a_i)) \sum_j \exp(\eta r(\vs^t,a_j))}\\
& = \dfrac{\exp(\eta r(\vs^t, a_i)}{\sum_{j=1}^k \exp(\eta r(\vs^t, a_j))}.
\end{align*}
\end{proof}
\subsection{Proof of lemma~\ref{lm2:mle}} 
\secondlemma*
\begin{proof}This lemma is a straight forward result of lemma~\ref{lm:lemma1}.
First, recall the problem defined in~\eqref{eq:std-minmax}:
\[
\min_{\theta \in \Theta} \left(\max_{\phi\in \Phi}  \E_{\phi} \left[\sum_{t=1}^Tr_{\theta}(\vs^t_{true}, a^t)\right] -\frac{1}{\eta}R(\phi)\right)-\sum_{t=1}^T r_{\theta}(\vs^t_{true}, a^t_{true})
        \]
We make a assumption that there is no repeated pair $(\vs^t_{true}, a^t)$ in~\eqref{eq:std-minmax}. This is a very soft assumption because $\vs^t_{true}$ is updated overtime, and $a^t$ is in fact representing its feature vector $\vf^t_{a^t}$, which is in space $\mathbb{R}^d$. With this assumption, we can let $\phi$ map each pair $(\vs^t_{true}, a^t)$ to the optimal vector $\phi^{t*}$  which maximize $r_{\theta}(\vs^t_{true}, a^t) -\frac{1}{\eta}R(\phi^t)$ since there is no repeated pair. Using~\eqref{eq:lm1_2}, we have
\begin{align*}
&\max_{\phi\in \Phi}  \E_{\phi} \left[\sum_{t=1}^Tr_{\theta}(\vs^t_{true}, a^t)\right] -\frac{1}{\eta}R(\phi) = \max_{\phi\in \Phi} \sum_{t=1}^T \E_{\phi} \left[r_{\theta}(\vs^t_{true}, a^t) \right]-\frac{1}{\eta}R(\phi) \\
    =& \sum_{t=1}^T\left(   \sum_{i=1}^k \phi_i^{t*}r(\vs^t, a_i) -\frac{1}{\eta} \sum_{i=1}^k \phi_i^{t*} \log \phi_i^{t*} \right)=\sum_{t=1}^T\frac{1}{\eta}\log\Big(\sum_{i=1}^k \exp(\eta r_{\theta}(\vs^t_{true}, a_i))\Big).
\end{align*}~\eqref{eq:std-minmax} can then be written as
\[
\min_{\theta\in\Theta}\sum_{t=1}^T\frac{1}{\eta}\log\Big(\sum_{i=1}^k \exp(\eta r_{\theta}(\vs^t_{true}, a_i))\Big) - \sum_{t=1}^T r_{\theta}(\vs^t_{true}, a^t_{true}),
\]
which is the negative log-likelihood function and is equivalent to lemma~\ref{lm2:mle}.
\end{proof}

\section{Alogrithm box}\label{app:algo}
The following is the algorithm of learning the cascading deep Q-networks. We employ the cascading $Q$ functions to search the optimal action efficiently (line~\ref{line:argmax}). Besides, both the experience replay~\citep{MniKavSilGraetal13} and $\varepsilon$-exploration techniques are applied. The system's experiences at each time-step are stored in a replay memory set $\gM$ (line~\ref{line:exp_replay1}) and then a minibatch of data will be sampled from the replay memory to update $\widehat{Q^j}$ (line~\ref{line:exp_replay2} and~\ref{line:exp_replay3}). An exploration to the action space is executed with probability $\varepsilon$ (line~\ref{line:epsilon-greedy}).

\begin{algorithm}[ht!]
\caption{cascading deep Q-learning (CDQN) with Experience Replay}
\label{alg:dqn}
  \SetKwFunction{Grad}{Grad}
  \SetKwProg{Fn}{Function}{:}{}
  \SetKwFor{uFor}{For}{do}{}
  \SetKwFor{ForPar}{For all}{do in parallel}{}
  \SetKwComment{Comment}{$\triangleright$\ }{}
  \SetCommentSty{mycommfont}
Initialize replay memory $\gM$ to capacity $N$\;

Initialize parameter $\Theta_j$ of $\widehat{Q^j}$
with random weights for each $1\leq j\leq k$ \;
\uFor{iteration $i=1$ to $L$}{
        Sample a batch of users $\gU$ from training set \;
        Initialize the states ${\vs}^0$ to a zero vector for each $u\in \gU$ \;
        \uFor{$t=1$ to $T$}{
            \uFor{each user $u\in \gU$ simultaneously}{
                With probability $\varepsilon$ select a random subset $\gA^t$ of size $k$\label{line:epsilon-greedy} \;
                 
                Otherwise, $\displaystyle \gA^t =
        \textsc{argmax\_Q}(\vs_u^t, \gI^t, \Theta_1,\cdots,\Theta_k)$\label{line:argmax} \;
                 
                 Recommend $\gA^t$ to user $u$, observe user action $a^t\sim\phi(\vs^t,\gA^t)$ and update user state $\vs^{t+1}$\;
                 
                 Add tuple $\big(\vs^t, \gA^t, r(\vs^t, a^t), \vs^{t+1}\big)$ to $\gM$\label{line:exp_replay1} \;
                }
                Sample random minibatch $B\overset{\text{iid.}}{\sim}\gM$\label{line:exp_replay2}\;
                
                For each $j$, update $\Theta_j$ by SGD over the loss $\big(y- \widehat{Q^j}(\vs^t, A^t_{1:j}; \Theta_j)\big)^2$ for $B$\label{line:exp_replay3}
        }
}
\KwRet $\Theta_1,\cdots,\Theta_k$
\end{algorithm}

\section{Dataset description}\label{app:dataset}
{\bf (1) MovieLens public dataset}\footnote{https://grouplens.org/datasets/movielens/} contains large amounts of movie ratings collected from their website. We randomly sample 1,000 active users from this dataset. On average, each of these active users rated more than 500 movies (including short films), so we assume they rated almost every movie that they watched and thus equate their rating behavior with watching behavior. MovieLens dataset is the most suitable public dataset for our experiments, but it is still not perfect. In fact, none of the public datasets provides the context in which a user's choice is made. Thus, we simulate this missing information in a reasonable way. For each movie watched(rated) on the date $d$, we collect a list of movies released within a month before that day $d$. On average, movies run for about four weeks in theater. Even though we don't know the actual context of user's choice, at least the user decided to watch the rated movie instead of other movies in theater. Besides, 
we control the maximal size of each displayed set by 40. {\bf Features:} In MovieLens dataset, only titles and IDs of the movies are given, so we collect detailed movie information from Internet Movie Database(IMDB). Categorical features as encoded as sparse vectors and descriptive features are encoded as dense vectors. The combination of such two types of vectors produces 722 dimensional raw feature vectors. To further reduce dimensionality, we use logistic regression to fit a wide\&deep networks~\citep{ChengKocHarmsen16} and use the learned input and hidden layers to reduce the feature to 10 dimension.

{\bf (2) An online news article recommendation dataset from Ant Financial} is anonymously collected from Ant Financial news article online platform. It consists of 50,000 users' clicks and impression logs for one month, involving dozens of thousands of news. It is a time-stamped dataset which contains user features, news article features and the context where the user clicks the articles. The size of the display set is not fixed, since a user can browse the news article platform as she likes. On average a display set contains 5 new articles, but it actually various from 2 to 10.  {\bf Features:} The news article raw features are approximately of dimension 100 million because it summarizes the key words in the article. Apparently it is too expensive to use these raw features in practice. The features we use in the experiments are 20 dimensional dense vector embedding produced from the raw feature by wide\&deep networks. 
The reduced 20 dimensional features are widely used in this online platform and revealed to be effective in practice.

{\bf (3) Last.fm}\footnote{https://www.last.fm/api} contains listening records from 359,347 users. Each display set is simulated by collecting 9 songs with nearest time-stamp. 

{\bf (4) Yelp}\footnote{https://www.yelp.com/dataset/} contains users' reviews to various businesses. Each display set is simulated by collecting 9 businesses with nearest location. 

{\bf (5) RecSys15}\footnote{https://2015.recsyschallenge.com/} contains click-streams that sometimes end with purchase events. 

{\bf (6) Taobao}\footnote{https://tianchi.aliyun.com/datalab} contains the clicking behavior and buying behavior of users in 22 days. We consider the buying behaviors as positive events.
\section{More figures for experimental results}\label{app:experiment}

\subsection{Figures for section~\ref{sec:experiment1}}\label{app:exp_usermodel}
An interesting comparison is shown in Figure~\ref{fg:traj} and more similar figures are provided here. The blue curve is the trajectory of a user's actual choices of movies over time. The orange curves are simulated trajectories predicted by {GAN} and CCF, respectively. Similar to what we conclude in section~\ref{sec:experiment1}, these figures reveal the good performances of {GAN} user model in terms of capturing the evolution of users' interest. 

\begin{figure}[h]
    \includegraphics[width=0.48\columnwidth]{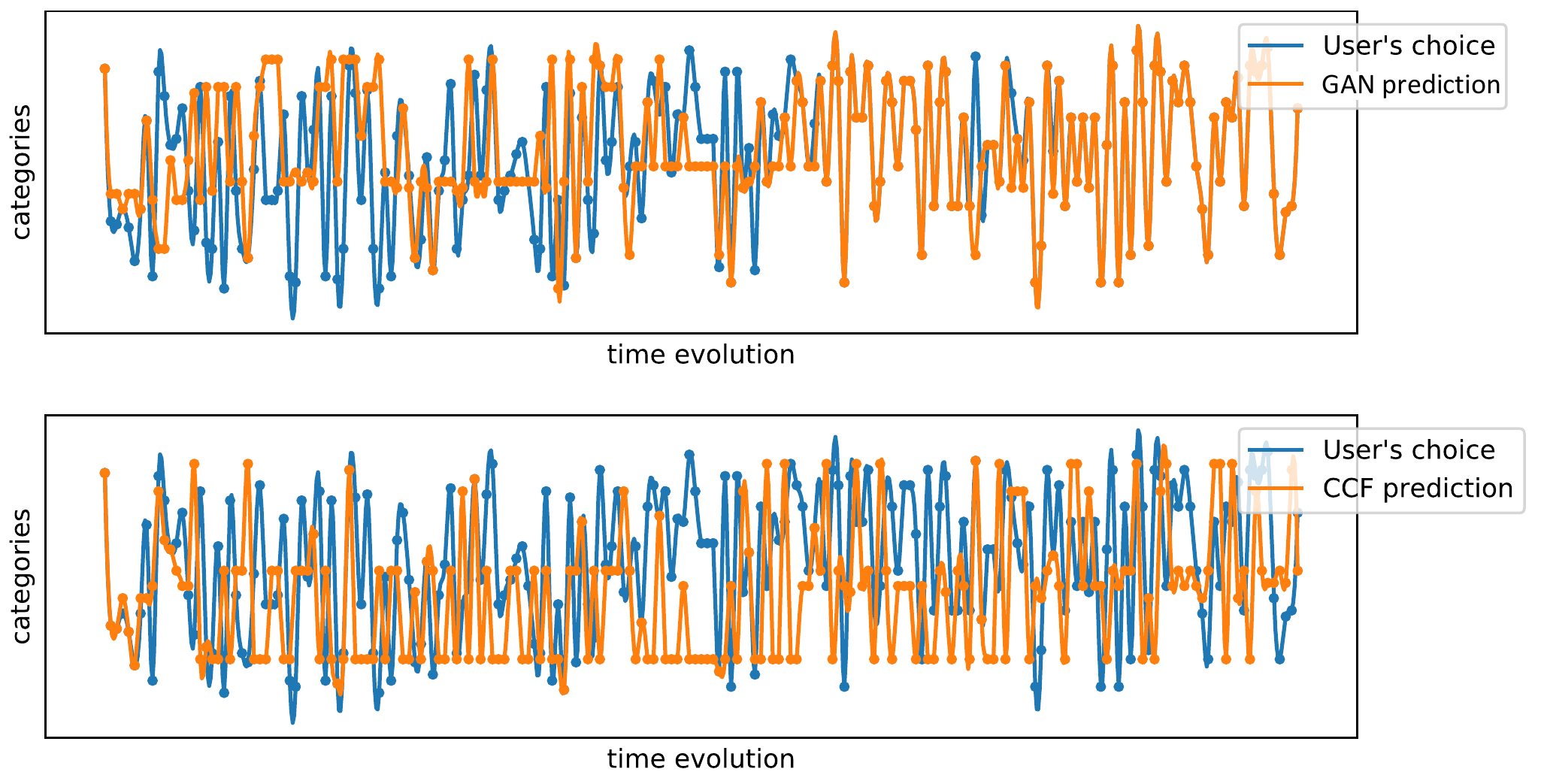}
    \includegraphics[width=0.48\columnwidth]{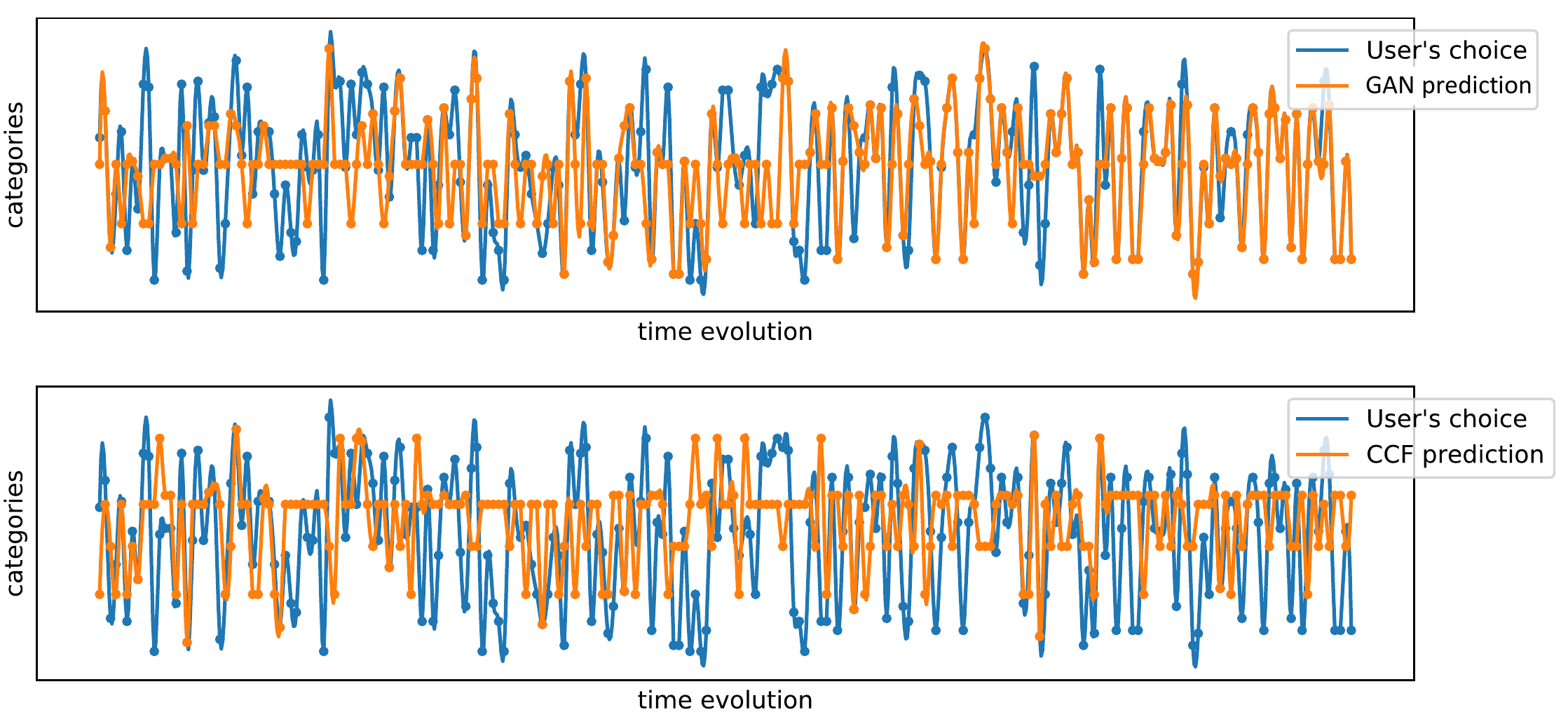}
  \caption{Two more examples: comparison of the true trajectory(blue) of user's choices, the simulated trajectory predicted by {GAN} model (orange curve in upper sub-figure) and the simulated trajectory predicted by CCF (orange curve in the lower sub-figure) for the same user. $Y$-axis represents 80 categories of movies.
    }
\end{figure}

\subsection{Figures for section~\ref{sec:experiment2}}\label{app:exp_policy2}
 We demonstrate the policy performance in user level in figure~\ref{fg:policy_compare_rwd} by comparing the cumulative reward. Here we attach the figure which compares the click rate. In each sub-figure, red curve represents {GAN}-DQN policy and blue curve represents the other. {GAN}-DQN policy contributes higher averaged click rate for most users.
\begin{figure}[ht!]
\centering
\includegraphics[width=\textwidth]{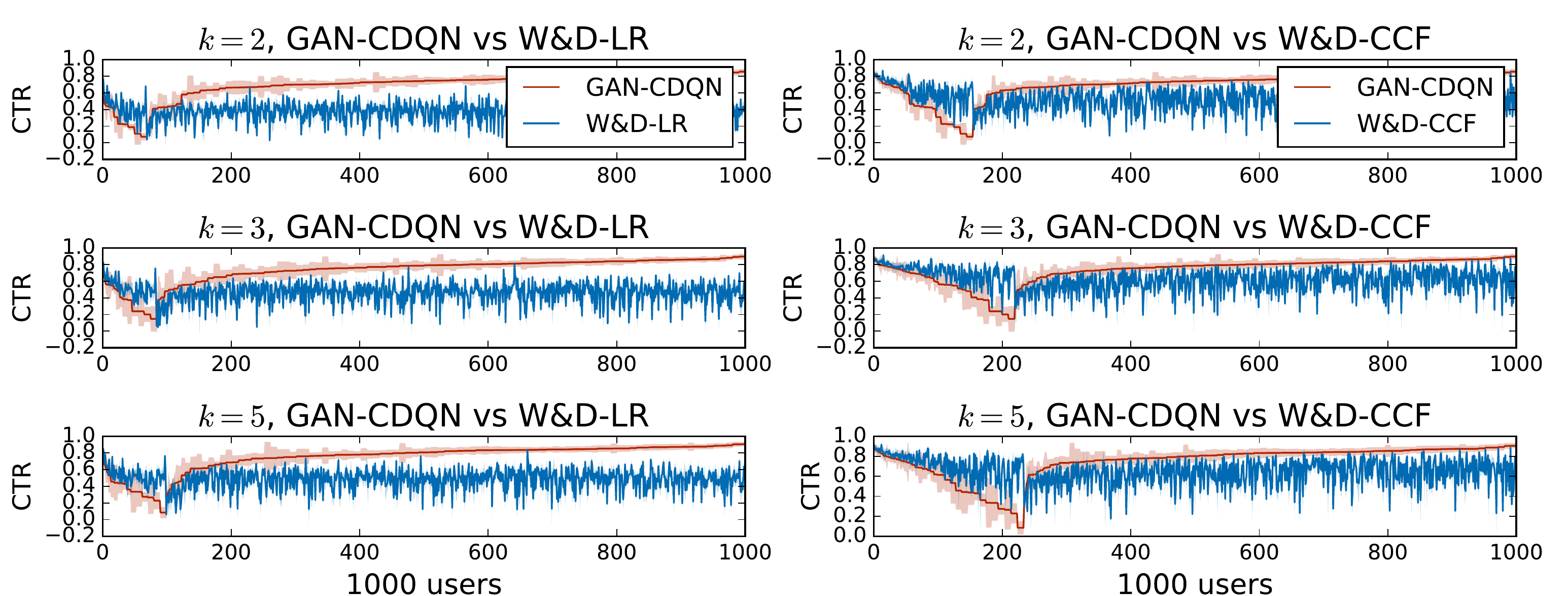}   
\caption{Comparison of click rates among 1,000 users under the recommendation policies based on different user models. In each figure, red curve represents {GAN}-DQN policy and blue curve represents the other. The experiments are repeated for 50 times and standard deviation is plotted as the shaded area. This figure is similar to figure~\ref{fg:policy_compare_rwd}, except that it plots the value of click rates instead of user's cumulative rewards.}
\end{figure}

\subsection{Figures for section~\ref{sec:experiment3}}\label{app:exp_policy3}
This figure shows three sets of results corresponding to different sizes of display set. It reveals how users' cumulative reward(averaged over 1,000 users) increases as each policy interacts with and adapts to 1,000 users over time. It can be easily that the CDQN policy pre-trained over a {GAN} user model can adapt to online users much faster then other model-free policies and can reduce the risk of losing the user at the beginning. The experiment setting is similar to section~\ref{sec:experiment2}. All policies are evaluated on a separated set of 1,000 users associated with a test model. We need to emphasize that the {GAN} model which assists the CDQN policy is learned from a training set of users without overlapping test users. It is different from the test model which fits the 1,000 test users. 
\begin{figure}[htbp]
    \centering
    \includegraphics[width=0.85\textwidth]{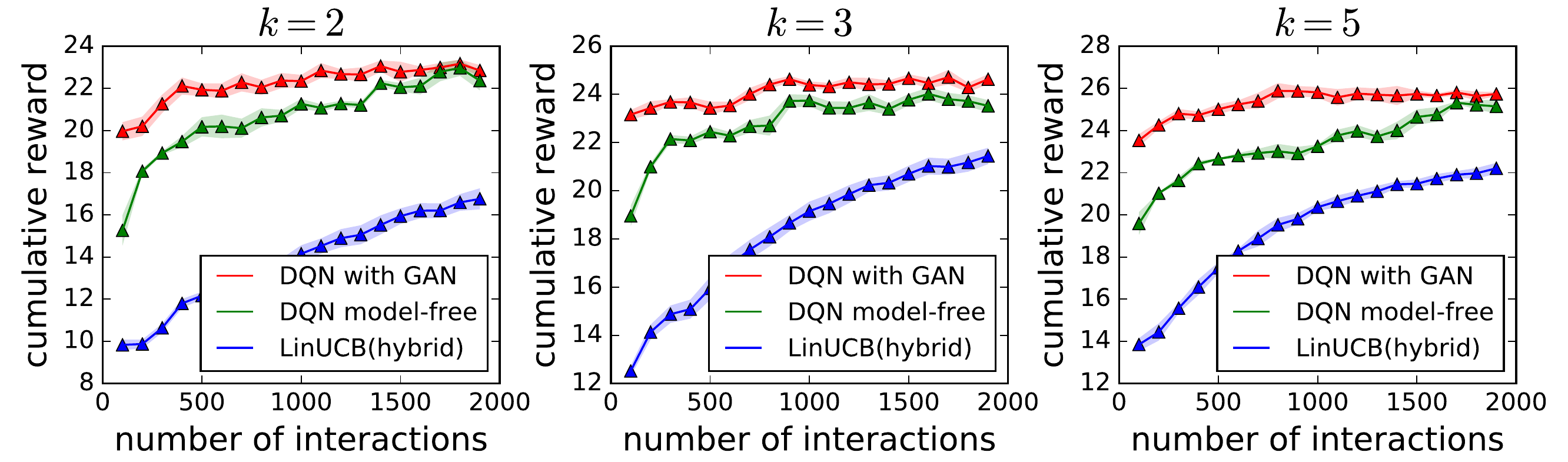}
\caption{Comparison of the averaged cumulative reward among 1,000 users under different adaptive recommendation policies. $X$-axis represents how many times the recommender interacts with online users. Here the recommender interact with 1,000 users each time, so in fact each interaction represents 100 online data points. $Y$-axis is the click rate. Each point $(x,y)$ in this figure means a click rate $y$ is achieved after $x$ many times of interactions with the users. }
\end{figure}
\end{document}